\newcolumntype{Y}{>{\centering\arraybackslash}X}
\newtheorem{theorem}{Theorem}  
\theoremstyle{definition}               
\theoremstyle{remark}                   
\definecolor{gold}{rgb}{1.0, 0.0, 0.0}
\definecolor{silver}{rgb}{0.8, 0.5, 0.2}
\definecolor{bronze}{rgb}{0.0, 1.0, 0.0}
\definecolor{iccvblue}{rgb}{0.21,0.49,0.74}
\title{End-to-End Multi-Modal Diffusion Mamba}
\author{%
\makebox[\textwidth][c]{%
  Chunhao~Lu\textsuperscript{1}\quad
  Qiang~Lu\textsuperscript{1\Letter}\quad
  Meichen~Dong\textsuperscript{1,2}\quad
  Jake~Luo\textsuperscript{3}%
}\\[0.3em]
\makebox[\textwidth][c]{%
  \textsuperscript{1}China University of Petroleum-Beijing, \textsuperscript{2}Leyard Optoelectronic, \textsuperscript{3}University of Wisconsin-Milwaukee}\\[0.3em]
\makebox[\textwidth][c]{%
  {\tt\small \{luchunhao,\,meichen.dong\}@student.cup.edu.cn}\quad
  {\tt\small luqiang@cup.edu.cn}\quad
  {\tt\small jakeluo@uwm.edu}%
}%
}
\begin{document}
\maketitle
\begin{abstract}
Current end-to-end multi-modal models utilize different encoders and decoders to process input and output information. This separation hinders the joint representation learning of various modalities. 
To unify multi-modal processing, we propose a novel architecture called MDM (Multi-modal Diffusion Mamba). MDM utilizes a Mamba-based multi-step selection diffusion model to progressively generate and refine modality-specific information through a unified variational autoencoder for both encoding and decoding.
This innovative approach allows MDM to achieve superior performance when processing high-dimensional data, particularly in generating high-resolution images and extended text sequences simultaneously. 
Our evaluations in areas such as image generation, image captioning, visual question answering, text comprehension, and reasoning tasks demonstrate that MDM significantly outperforms existing end-to-end models (MonoFormer, LlamaGen, and Chameleon etc.) and competes effectively with SOTA models like GPT-4V, Gemini Pro, and Mistral.
Our results validate MDM's effectiveness in unifying multi-modal processes while maintaining computational efficiency, establishing a new direction for end-to-end multi-modal architectures.
\end{abstract}    
\vspace{-2em}
\section{Introduction}
\label{sec:intro}
\vspace{-0.5em}
Traditional large-scale multi-modal models~\cite{li2022blip, yu2022coca, alayrac2022flamingo, bao2022vlmo, kim2021vilt, radford2021learning, lu2024explainable, yuan2025autodrive, yuan2025video, qian2025dyncim, ProAPO, MADS, EmDepart, wang2025editor} typically use multiple encoders and decoders to process multi-modal data. This approach makes learning a unified joint representation of the multi-modal data difficult and can significantly slow inference time (as shown in~\cref{fig:1}A). 
To alleviate these problems, end-to-end models without modal-fusion en(de)coder architecture have been proposed (as shown in~\cref{fig:1}B). This approach offers a streamlined, unified processing framework that enhances efficiency and consistency in multi-modal representation learning. Existing end-to-end models follow three primary strategies: (1) Autoregressive models~\cite{team2024chameleon, fuyu-8b, he2024mars, sun2024autoregressive} leverage a single Transformer for both text and image generation, but struggle with the inherent sequential dependency of autoregressive decoding. (2) Hybrid image generation models~\cite{ge2024seed, wu2023next} integrate an additional image synthesis module, improving image quality but introducing extra complexity. (3) Mixed autoregressive-diffusion models~\cite{zhao2024monoformer, zhou2024transfusion, chu2025usp} employ diffusion-based image generation while maintaining an autoregressive framework for text, yet still struggles with unifying multi-modal.
\begin{figure}[t]
  \centering
   \includegraphics[width=1\linewidth]{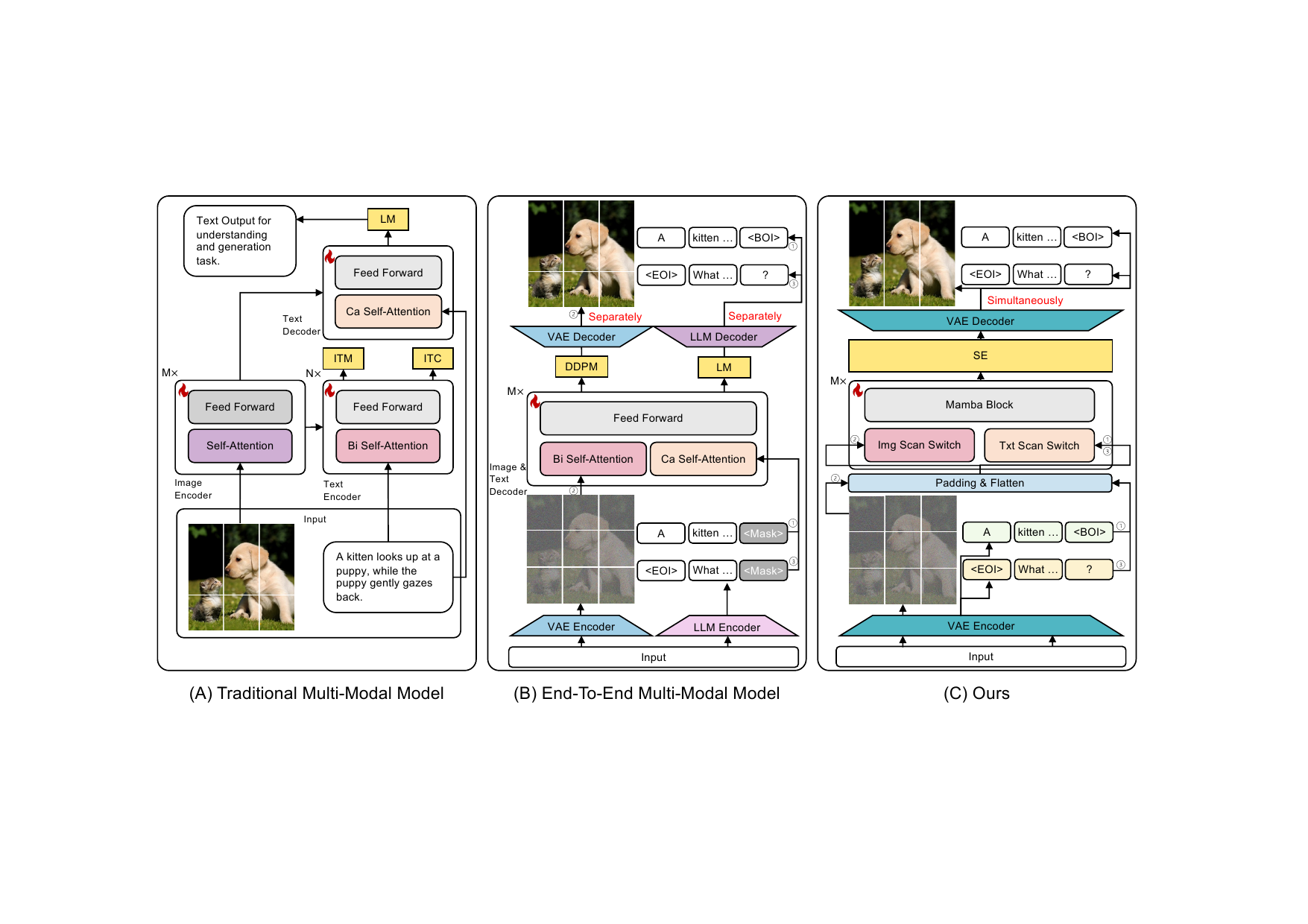}
   \vspace{-2em}
   \caption{Comparison of three types of models.}
   \label{fig:1}
   \vspace{-2em}
\end{figure}

\begin{figure*}[t]
  \centering
   \includegraphics[width=1\linewidth]{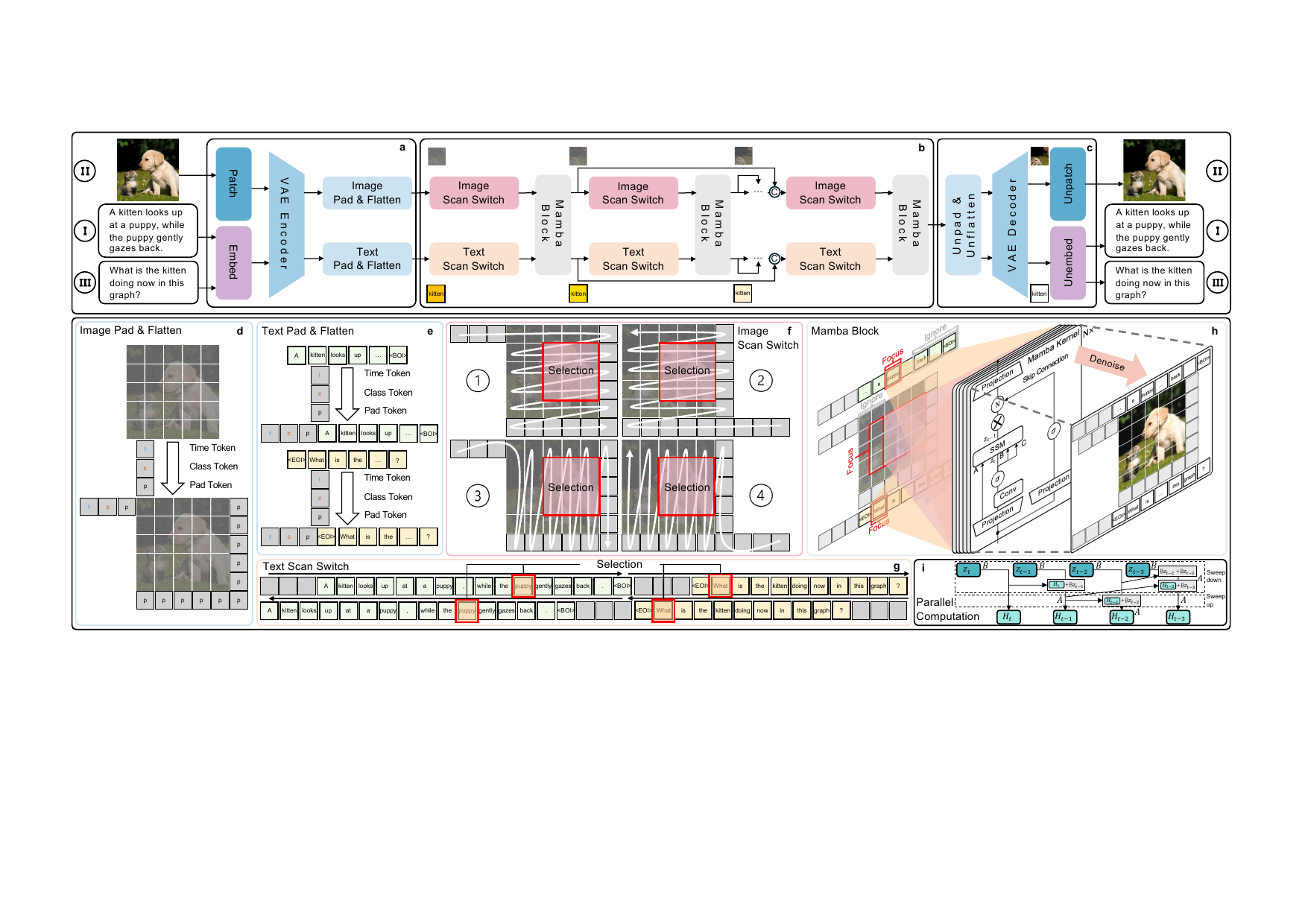}
   \vspace{-2em}
   \caption{Framework of Multi-Modal Diffusion Mamba. MDM first encodes inputs (caption, VQVAE-processed image, question) using VAE (a), while performing padding (class, diffusion timestep, token completion) and flatten operations (d, e). Next, data reconstruction is progressively completed via diffusion mamba operations (b), modeling images and text temporally through scanning processes (f, g) for efficient information selection (red boxes indicate selection). Selected data undergoes computation (i) guided by (h) within the Mamba-2 framework to update model parameters. Finally, the MDM output passes through the VAE decoder (c) to reconstruct real data.}
   \label{fig:2}
   \vspace{-1.5em}
\end{figure*}

Despite recent advancements, Transformer-based end-to-end models face several critical challenges: (1) their quadratic computational complexity makes them inefficient for generating high-resolution image and long-sequence text. Although various studies have attempted to optimize this computational complexity~\cite{wan2023efficient, touvron2023llama, shazeer2019fast, ainslie2023gqa, alberti2023sumformer, gupta2023flurka, pagliardini2023fast, dao2022flashattention, han2023hyperattention, prabhu2024vattention}, the challenge remain substantial. (2) their reliance on multi-objective learning introduces conflicting optimization goals, impeding convergence and hindering effective joint representation learning. In contrast, state-space models like Mamba~\cite{qu2024survey, gu2023mamba} offer a compelling alternative due to their ability to scale linearly with sequence length while effectively capturing long-range dependencies. However, the current multi-modal implementations of Mamba~\cite{liu2024robomamba, yang2024shmamba, qiao2024vl, dong2024fusion, wan2024sigma, yan2024diffusion, teng2024dim, fei2024dimba, hatamizadeh2024mambavision, hu2024zigma} still adopt a multi-objective approach, limiting their capacity for end-to-end joint representation learning.

To effectively process multi-modal data, we propose an end-to-end model called the Multi-Modal Diffusion Mamba (MDM) (as shown in~\cref{fig:1}c). MDM first employs patchify~\cite{dosovitskiy2020image} and embedding to pre-process multi-modal data. Then, it uses a variational autoencoder (VAE)~\cite{kingma2013auto} as a multi-modal encoder, which uniformly maps the multi-modal data to a noisy latent space (as illustrated in~\cref{fig:2}a). MDM constructs a multi-step selection diffusion model based on the Mamba architecture as a uniform decoder for the rapid generation of multi-modal information.

This decoder generates the target text or image step-by-step based on the diffusion process through the multi-step selection diffusion model (as shown in~\cref{fig:2}b). To enhance decoding speed, the decoder employs the Score Entropy Loss~\cite{loudiscrete} as the objective function instead of Markov chain-based~\cite{ho2020denoising} methods for updating the network to handle multi-modal data throughout the diffusion process. The decoder comprises two components: an image and text scan switch, and a Mamba-2 block~\cite{gu2023mamba}. The text scan switch has two modes for sequence modeling (as shown in~\cref{fig:2}f), while the image scan switch has four, based on the settings of DiM~\cite{teng2024dim} (as shown in~\cref{fig:2}e). The scan switches enable the model to capture sequential relationships across various temporal directions in the data. The selection state-space structure in Mamba then analyzes these sequential relationships within the current denoising step. This analysis guides the selection of relevant information to focus on and irrelevant information to ignore, effectively directing the model's denoising process at each step.

Since MDM unifies the modality encoder and decoder, the model is capable of generating an image and text simultaneously. For example, as shown in~\cref{fig:2}h, when generating an image of a dog alongside its description, the scan switch in the decoder first assesses whether the description contains conditions that necessitate image generation. If such conditions exist, the image scan switch is activated. Consequently, the model directs its selection to the image patches corresponding to the dog during each denoising step. This targeted focus guides the model to effectively denoise relevant pixels while disregarding other areas of the image. A similar selection process is employed for text data. Ultimately, the data, once denoised via the t-step diffusion process, is reconstructed into authentic text (or an image) through the VAE decoder simultaneously. The main contributions of this paper are as follows.

1) We introduce the Multi-Modal Diffusion Mamba (MDM), an end-to-end model that achieves a computational complexity of $\mathcal{O}(MLN^2)$, outperforming previous end-to-end models like MonoFormer~\cite{zhao2024monoformer}, which operate at $\mathcal{O}(ML^{2}N/G)$. This advancement enables the efficient generation of long-sequence text and high-resolution images.

2) We propose a novel multi-step selection diffusion model that combines autoregressive and diffusion-based generative paradigms into a unified learning objective. This method effectively integrates both paradigms within a diffusion process, generating multi-modal data simultaneously.

3) Our experimental results demonstrate MDM's superior performance in image generation on the ImageNet~\cite{deng2009imagenet} and COCO datasets~\cite{karpathy2015deep}. Additionally, it excels in various tasks, including image captioning on Flickr30K~\cite{young2014image} and COCO~\cite{karpathy2015deep}, VQA on VQAv2~\cite{goyal2017making}, VizWiz~\cite{gurari2018vizwiz}, and OKVQA~\cite{marino2019ok}, as well as text comprehension and reasoning on seven datasets~\cite{zellers2019hellaswag, mihaylov2018can, sakaguchi2021winogrande, clark2018think, clark2019boolq, bisk2020piqa}. Furthermore, MDM shows strong results in math-related world knowledge tasks on GSM8k~\cite{cobbe2021training}, MATH~\cite{hendrycks2021measuring}, and MMLU~\cite{hendrycks2020measuring}.
\vspace{-2.5em}
\section{Related Works}
\label{sec:related}
\vspace{-0.5em}
\subsection{Traditional large multi-modal model}
\vspace{-0.5em}
Most existing LMMs are built by integrating architectures from multiple modalities. SOTA image and video generation models employ pre-trained text encoders to represent input prompts in latent space, which then condition a diffusion model for generating videos and images~\cite{saharia2022photorealistic, wang2025jasmine, lan2025flux, chen2025finger}. Many researchers have adopted this approach, fusing feature representations from multiple pre-trained encoders to enhance model performance across different modalities~\cite{podell2023sdxl, esser2024scaling}. This pattern is also prevalent in visual language models, where pre-trained language models are typically augmented with linear projection layers from other pre-trained en/decoders for training in the text space. Examples include Flamingo~\cite{alayrac2022flamingo} and LLaVA~\cite{liu2024visual} for visual understanding, GILL~\cite{koh2024generating} for visual generation, and DreamLLM~\cite{dong2023dreamllm} for both understanding and generation.

\vspace{-0.6em}
\subsection{End-to-End multi-modal model}
\vspace{-0.5em}
End-to-end models have emerged recently to facilitate joint representation learning while improving training and inference efficiency. It can be categorized into three main types: \\
1) \textbf{The autoregressive model}~\cite{team2024chameleon, fuyu-8b, he2024mars, sun2024autoregressive} utilizes one Transformer with an autoregressive approach to generate images and text. For instance, the Fuyu model~\cite{fuyu-8b} processes image patches directly as input to achieve visual comprehension. Models like Chameleon~\cite{team2024chameleon}, Mars~\cite{he2024mars}, and LlamaGen~\cite{sun2024autoregressive} convert images into discrete sequence tokens, then concatenate them with text.\\
2) \textbf{The hybrid image generation model}~\cite{ge2024seed, wu2023next} addresses the limitations of autoregressive approaches in image generation. While maintaining an autoregressive structure for text generation, the models enhance image quality by incorporating an image-generation network. For example, Seed-x model~\cite{ge2024seed} focuses on enhancing specific aspects of image generation, while Next-GPT~\cite{wu2023next} aims to expand multi-modal capabilities within an end-to-end framework.\\
3) \textbf{The mixed autoregressive-diffusion model}~\cite{zhao2024monoformer, zhou2024transfusion} combines the strengths of previous approaches. It performs text autoregressive generation and image diffusion restoration simultaneously. Models like MonoFormer~\cite{zhao2024monoformer} and Transfusion~\cite{zhou2024transfusion} achieve this by incorporating causal self-attention~\cite{yang2021causal} for text tokens and bidirectional self-attention~\cite{devlin2018bert} for image patches, enabling high-quality multi-modal understanding and generation.

\vspace{-0.5em}
\subsection{Mamba in multi-modal model}
\vspace{-0.5em}
Mamba has emerged as a powerful alternative to Transformer for multi-modal data alignment~\cite{liu2024robomamba, yang2024shmamba, wang2024text, dong2024fusion, wan2024sigma}.
Recent works showcase Mamba's capabilities across different multi-modal applications. VL-Mamba~\cite{qiao2024vl} combines a pre-trained Mamba model for language understanding with a connector module to align visual patches and language tokens. However, these models lack end-to-end training capabilities and struggle to learn unified joint representations. MDM provides a truly end-to-end architecture, enabling rapid generation of high-quality, long sequences.
\vspace{-0.6em}
\section{Multi-step Selection Diffusion Model}
\label{sec:E2EMDM}
\vspace{-0.5em}
The multi-step selection diffusion model enables rapid generation of multi-modal information through two key processes: diffusion $\&$ denoising and selection. During the diffusion $\&$ denoising, the model employs a unified Score Entropy Loss~\cite{loudiscrete}(SE) to gradually reconstruct target data from noise through a series of denoising steps (as illustrated in~\cref{fig:2}b). The selection process enables the model to capture sequential relationships across different temporal dimensions in the latent space, determining which information should be focused on or ignored during each diffusion denoising step (as shown in~\cref{fig:2}h). 

\vspace{-0.5em}
\subsection{Diffusion \& Denoising}
\vspace{-0.5em}
The diffusion $\&$ denoising process comprises two main components: diffusion and denoising. The diffusion component can be expressed by the following equation:
\begin{equation}
    z_{n,t}^{g} = \sqrt{\bar{\alpha}_{t}^{g}}z_{n,0}^{g} + \sqrt{1-\bar{\alpha}_{t}^{g}}\epsilon_{n,t}^{g},
    \label{eq:1}
\end{equation}
where $g$ denotes either image patch or text embedding, and $z_{n,0}^{g}$ represents the latent space vector of the $n$-th image patch or text embedding, obtained through VAE sampling~\cite{kingma2013auto}. $z_{n,t}^{g}$ is derived from $z_{n,0}^{g}$ after $t$ steps of noise addition; $\epsilon_{n,t}^{g}\sim\mathcal{N}(0,I)$ represents the added noise; $\bar{\alpha}_{t}^{g}=\prod_{k=1}^{t}\alpha_{k}^{g}$, $\alpha_{k}^{g}=1-\beta_{k}^{g}$, and $\{\beta_{k}^{g}\in (0,1)\}_{k=1}^{T}$ are Gaussian distribution hyperparameters controlling the forward diffusion noise. Following the diffusion Markov principle~\cite{ho2020denoising}, $t$-step forward diffusion process can be characterized by conditional probabilities as follows:
\begin{equation}
    p(z_{n,t}^{g}|z_{n,0}^{g}) = \mathcal{N}(z_{n,t}^{g}; \sqrt{\bar{\alpha}_{t}^{g}}z_{n,0}^{g}, (1-\bar{\alpha}_{t}^{g})I),
    \label{eq:2}
\end{equation}
which means that given $z_{n,0}^{g}$, $z_{n,t}^{g}$ follows a Gaussian distribution with $\sqrt{\bar{\alpha}_{t}^{g}}z_{n,0}^{g}$ as mean and $(1-\bar{\alpha}_{t}^{g})I$ as variance.

In the classic diffusion denoising component~\cite{ho2020denoising}, the model needs to learn the posterior $p(z_{n,t-1}^{g}|z_{n,t}^{g})$ to gradually reconstruct the data. Since $p(z_{n,t}^{g}|z_{n,0}^{g})$ follows a Gaussian distribution, we can assume that the approximate distribution of the denoising process is:
\begin{equation}
    p_{\theta}(z_{n,t-1}^{g} | z_{n,t}^{g}) = \mathcal{N}(z_{n,t-1}^{g}; \mu_{\theta}(z_{n,t}^{g}), (\sigma_{\theta,n}^{g})^2).
\label{eq:3}
\end{equation}
where $\mu_{\theta}(z_{n,t}^{g})$ and $\sigma_{\theta,n}^{g}$ represent the model predicted noise mean and variance at the $t$-th denoising step.

This method achieves the gradual recovery of data by optimizing the conditional probability of each time step by maximum likelihood. However, Markov chain-based~\cite{ho2020denoising} methods limit computational efficiency in high-dimensional spaces and are difficult to extend to discrete data.

To further optimize the denoising process, this paper uses SE~\cite{loudiscrete} as the optimization target. It is a generalized score matching objective that aims to directly learn the probability density ratio between discrete states. The SE can not only stabilize the diffusion denoising process but also improve the sampling quality through the global information of data distribution. In general form, for any state pair $(z_{n,t}^{g},z_{n,0}^{g})$, define the model's score ratio $s_{\theta}(z_{n,t}^{g})$, which represents the relative probability of transferring from $z_{n,t}^{g}$ to $z_{n,0}^{g}$. SE is defined as:
\begin{equation}
    \begin{split}
        se = \sum_{y\in z_{n,0:t-1}^{g}} \omega_{z_{n,t}^{g}}^{g} \bigg( & s_{\theta}(z_{n,t}^g) - \frac{p_{data}(y)}{p_{data}(z_{n,t}^{g})} \log{s_{\theta}(z_{n,t}^g)} \\
        & + K\left(\frac{p_{data}(y)}{p_{data}(z_{n,t}^{g})}\right) \bigg),
    \end{split}
\label{eq:4}
\end{equation}
where $\omega_{z_{n,t}^{g}}^{g}$ is the weight of the loss term, which is used to balance the loss of different states. $K(a) = a(\log{a}-1)$ is a normalization term that ensures the loss is non-negative. $\frac{p_{data}(y)}{p_{data}(z_{n,t}^{g})}$ represents the actual score ratio. $p_{data}(y)$ and $p_{data}(z_{n,t}^{g})$ are the actual data distributions of the former noisy state and the current noisy state. The actual score ratio calculation relationship is shown in~\cref{th_1}.

\begin{theorem} \label{th_1}
    According to Bayes' theorem and the Gaussian distribution density formula, the following calculation relationship of $\frac{p_{data}(y)}{p_{data}(z_{n,t}^{g})}$ is obtained:
\begin{equation}
    \frac{p_{data}(y)}{p_{data}(z_{n,t}^{g})}=\exp{\Bigg( \frac{\|z_{n,t}^{g}\|^{2}}{2}-\frac{\|z_{n,t}^{g}-\sqrt{\bar{\alpha}_{t}^{g}}z_{n,0}^{g}\|^{2}}{2(1-\bar{\alpha}_{t}^{g})}\Bigg)}.
\label{eq:5}
\end{equation}
\end{theorem}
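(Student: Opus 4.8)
The plan is to derive the density ratio directly from the forward-diffusion law in \cref{eq:2}, treating $y$ as the noisy state $z_{n,t-1}^{g}$ one step earlier in the chain. First I would write $p_{data}(z_{n,t}^{g})$ by conditioning on $z_{n,0}^{g}$ and invoking \cref{eq:2}, so that $p(z_{n,t}^{g}\mid z_{n,0}^{g})$ is the Gaussian density with mean $\sqrt{\bar\alpha_{t}^{g}}z_{n,0}^{g}$ and covariance $(1-\bar\alpha_{t}^{g})I$. The key simplifying assumption — which I expect the paper is implicitly making — is that the ``former noisy state'' $y$ is compared against a reference standard normal, i.e. $p_{data}(y)$ plays the role of $\mathcal{N}(z_{n,t}^{g};0,I)$ evaluated at the current point, which is exactly the prior the diffusion converges to. Under that reading the ratio is a quotient of two Gaussian densities evaluated at the same argument $z_{n,t}^{g}$.

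Next I would form the quotient explicitly. The numerator contributes $\exp\!\big(-\tfrac{1}{2}\|z_{n,t}^{g}\|^{2}\big)$ up to the normalizing constant $(2\pi)^{-d/2}$, and the denominator contributes $\exp\!\big(-\tfrac{\|z_{n,t}^{g}-\sqrt{\bar\alpha_{t}^{g}}z_{n,0}^{g}\|^{2}}{2(1-\bar\alpha_{t}^{g})}\big)$ up to $(2\pi(1-\bar\alpha_{t}^{g}))^{-d/2}$. Taking the ratio, the exponential terms combine into
\begin{equation}
\exp\!\Bigg(\frac{\|z_{n,t}^{g}\|^{2}}{2}-\frac{\|z_{n,t}^{g}-\sqrt{\bar\alpha_{t}^{g}}z_{n,0}^{g}\|^{2}}{2(1-\bar\alpha_{t}^{g})}\Bigg),
\end{equation}
which is precisely \cref{eq:5}; the polynomial normalizing prefactors $(1-\bar\alpha_{t}^{g})^{d/2}$ are absorbed into the loss weight $\omega_{z_{n,t}^{g}}^{g}$ in \cref{eq:4}, so they do not appear in the stated identity. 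I would then note that the sign convention is consistent: when $z_{n,t}^{g}$ is close to the clean signal direction the denominator exponent is small, making the ratio large, which matches the intuition that $s_{\theta}$ should upweight transitions toward $z_{n,0}^{g}$.

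The main obstacle is pinning down exactly which two distributions the ratio $p_{data}(y)/p_{data}(z_{n,t}^{g})$ refers to, since the text is loose about this: it says $y$ ranges over $z_{n,0:t-1}^{g}$ yet writes a single closed form independent of $y$. I would resolve this by arguing that, after marginalizing over the intermediate states of the Markov chain using \cref{eq:1}, every comparison collapses to ``signal-plus-noise vs. pure noise,'' so the effective density ratio depends only on the current state $z_{n,t}^{g}$ and the clean latent $z_{n,0}^{g}$ through the two norms appearing in \cref{eq:5}. Once that identification is made, the remainder is a one-line Gaussian-density computation with Bayes' theorem (used to flip $p(z_{n,t}^{g}\mid z_{n,0}^{g})$ into a statement about $p_{data}$), and the normalization bookkeeping is routine. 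So the real content of the theorem is the modeling choice of reference measure, not the algebra.
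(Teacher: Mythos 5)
There is a genuine gap, and it sits exactly at the point you flag as the ``real content'' of the theorem: the choice of which Gaussian goes in which slot. You identify $p_{data}(y)$ (the numerator) with the standard normal reference $\mathcal{N}(z_{n,t}^{g};0,I)$ and $p_{data}(z_{n,t}^{g})$ (the denominator) with the conditional $\mathcal{N}\bigl(z_{n,t}^{g};\sqrt{\bar{\alpha}_{t}^{g}}z_{n,0}^{g},(1-\bar{\alpha}_{t}^{g})I\bigr)$ from \cref{eq:2}. With that assignment the quotient is
\[
\frac{\exp\bigl(-\frac{1}{2}\|z_{n,t}^{g}\|^{2}\bigr)}{\exp\Bigl(-\frac{\|z_{n,t}^{g}-\sqrt{\bar{\alpha}_{t}^{g}}z_{n,0}^{g}\|^{2}}{2(1-\bar{\alpha}_{t}^{g})}\Bigr)}
=\exp\Biggl(\frac{\|z_{n,t}^{g}-\sqrt{\bar{\alpha}_{t}^{g}}z_{n,0}^{g}\|^{2}}{2(1-\bar{\alpha}_{t}^{g})}-\frac{\|z_{n,t}^{g}\|^{2}}{2}\Biggr),
\]
which is the \emph{reciprocal} of \cref{eq:5}; the combined exponent you wrote down does not follow from your own setup, so the stated identity is reached only through a compensating sign slip. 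Your closing sanity check exhibits the same symptom: under your assignment, when $z_{n,t}^{g}$ is near $\sqrt{\bar{\alpha}_{t}^{g}}z_{n,0}^{g}$ the denominator density is large and the ratio is \emph{small}, contrary to the intuition you state (which is the intuition appropriate to \cref{eq:5}).

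The paper's own proof makes the opposite --- and correct --- identification. It applies Bayes' theorem to write $p_{data}(z_{n,0}^{g})/p_{data}(z_{n,t}^{g})=p(z_{n,t}^{g}\mid z_{n,0}^{g})/p(z_{n,t}^{g})$ (so $y$ is in effect $z_{n,0}^{g}$, not a standard-normal surrogate), assumes a standard Gaussian prior $p_{data}(z_{n,0}^{g})=\mathcal{N}(0,I)$ so that the marginal $p(z_{n,t}^{g})$ is again $\mathcal{N}(0,I)$, and then places the conditional Gaussian in the numerator and the standard normal in the denominator; the quotient of exponentials then yields \cref{eq:5} directly. So your plan uses the right two densities but in swapped roles. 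Two of your side remarks are reasonable and even slightly more careful than the paper: the non-cancelling normalizing prefactor $(1-\bar{\alpha}_{t}^{g})^{d/2}$ is indeed silently dropped by the paper (absorbing it into $\omega_{z_{n,t}^{g}}^{g}$ is one way to excuse that), and your observation that the closed form is independent of $y$ matches the paper's de facto replacement of $y$ by $z_{n,0}^{g}$. To repair the proof, swap the numerator/denominator identification and justify it via Bayes' theorem as above; as written, your argument establishes the inverse of the claimed ratio.
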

The proof is provided in Appendix A.

Based on the SE~\cite{loudiscrete}, the model predicted score ratio indicates how the model adjusts the probability of the current state to tend to the original data distribution during the denoising process. The definition is as follows:
\begin{equation}
    s_{\theta}(z_{n,t}^{g}) = \frac{p_{\theta}(z_{n,0}^{g})}{p_{\theta}(z_{n,t}^{g})},
\label{eq:6}
\end{equation}
where the denominator represents the probability of the current noise state and the numerator represents the original state probability estimated by the model. According to~\cref{th_3}, the model uses $softmax$ for normalization ensuring numerical stability and enabling gradient optimization when predicting the score ratio.

\begin{theorem} \label{th_3}
    Given the denoising process modelled by a score-based probability ratio function $s_{\theta}(z_{n,t}^{g})$, defined as~\cref{eq:6}, this paper defines a learnable approximation using a parameterized score function $f_{\theta}$, such that the probability ratio can be estimated as:
    \begin{equation}
        s_{\theta}(z_{n,t}^{g}) = \frac{\exp{(f_{\theta}(z_{n,t}^{g},z_{n,0}^{g}))}}{\sum_{y\in z_{n,0:t-1}^{g}}\exp{(f_{\theta}(z_{n,t}^{g},y))}},
    \label{eq:7}
    \end{equation}
\end{theorem}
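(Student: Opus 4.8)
The plan is to exhibit a parameterized family that is (i) expressive enough to represent the score ratio $s_{\theta}$ of \cref{eq:6} and (ii) by construction a normalized softmax of the learned function $f_{\theta}$. First I would fix the noised state $z_{n,t}^{g}$ and view the finite collection of earlier trajectory states $\{y : y\in z_{n,0:t-1}^{g}\}$ as a discrete alphabet. On a finite set every strictly positive measure is a Gibbs measure, so there is no loss of generality in writing the model's law over these states in energy-based form, $\log p_{\theta}(y\mid z_{n,t}^{g}) = f_{\theta}(z_{n,t}^{g},y) - \log Z_{\theta}(z_{n,t}^{g})$ with $Z_{\theta}(z_{n,t}^{g}) = \sum_{y\in z_{n,0:t-1}^{g}}\exp(f_{\theta}(z_{n,t}^{g},y))$; here $f_{\theta}$ carries one real degree of freedom per state and the global additive shift is absorbed into $\log Z_{\theta}$, so the family realizes every candidate distribution.

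Next I would connect this family to the ratio in \cref{eq:6}. The numerator $p_{\theta}(z_{n,0}^{g})$ is the model's mass on the specific clean state, which under the energy parameterization is the single term $\exp(f_{\theta}(z_{n,t}^{g},z_{n,0}^{g}))$ up to the common factor. The denominator $p_{\theta}(z_{n,t}^{g})$ is the mass of the current noised state, and by the law of total probability along the Markov forward chain of \cref{eq:1,eq:2} it equals the sum of the masses of all earlier states from which $z_{n,t}^{g}$ could have arisen, i.e.\ the normalizing sum $Z_{\theta}(z_{n,t}^{g})$. Substituting both into \cref{eq:6} makes the common factor cancel and yields exactly \cref{eq:7}. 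The Bayes-plus-Gaussian-kernel identification used here is the same device that produced the true ratio in \cref{th_1}, now applied to the model distribution rather than to $p_{data}$.

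Finally I would record the two properties invoked just before the statement. The right-hand side of \cref{eq:7} is invariant under $f_{\theta}(z_{n,t}^{g},\cdot)\mapsto f_{\theta}(z_{n,t}^{g},\cdot)-c(z_{n,t}^{g})$, so subtracting $\max_{y}f_{\theta}(z_{n,t}^{g},y)$ keeps every exponential in $(0,1]$ and the log-sum-exp denominator bounded, which is the claimed numerical stability; and since $f_{\theta}\mapsto s_{\theta}$ is a smooth map onto the interior of the probability simplex, the objective stays differentiable in $\theta$, enabling the gradient optimization used for end-to-end training.

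The main obstacle is the middle step: being careful about what $s_{\theta}$ denotes. Equation~\eqref{eq:6} writes it as a ratio of marginals, whereas the softmax in \cref{eq:7} normalizes over the conditional support $\{y\in z_{n,0:t-1}^{g}\}$, so the argument hinges on the self-consistency identity $p_{\theta}(z_{n,t}^{g}) = Z_{\theta}(z_{n,t}^{g})$ — equivalently, on the denoising model being compatible with the forward process of \cref{eq:1,eq:2}. Establishing that identity, rather than merely positing the Gibbs form, is where the real content lies; the remainder is just positivity of exponentials and cancellation of the partition function.
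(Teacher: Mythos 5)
The gap sits exactly where you flag it, in your middle step: the identity $p_{\theta}(z_{n,t}^{g}) = Z_{\theta}(z_{n,t}^{g}) = \sum_{y\in z_{n,0:t-1}^{g}}\exp(f_{\theta}(z_{n,t}^{g},y))$ does not follow from the law of total probability, and you never establish it. Total probability along the forward chain gives $p_{\theta}(z_{n,t}^{g})=\sum_{y}p(z_{n,t}^{g}\mid y)\,p_{\theta}(y)$, i.e.\ the earlier-state masses are weighted by the forward kernels of \cref{eq:1,eq:2}, not simply summed. Worse, in your Gibbs parameterization $f_{\theta}(z_{n,t}^{g},\cdot)$ plays the role of a log-posterior energy, so $\sum_{y}\exp(f_{\theta}(z_{n,t}^{g},y))$ is (up to the shift you absorb into $\log Z_{\theta}$) a conditional normalizer, which after normalization sums to $1$ rather than to the marginal $p_{\theta}(z_{n,t}^{g})$. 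Hence the cancellation that is supposed to turn \cref{eq:6} into \cref{eq:7} rests on an identification that is unproven and, as justified, incorrect; your closing paragraph concedes precisely this, so the proposal stops short of a proof at its central step.

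For comparison, the paper does not prove that identity either, because its argument is essentially definitional: it applies Bayes' theorem to express $p_{\theta}(z_{n,0}^{g}\mid z_{n,t}^{g})$, defines $f_{\theta}(z_{n,t}^{g},z_{n,0}^{g})\approx\log p_{\theta}(z_{n,0}^{g}\mid z_{n,t}^{g})$, computes its explicit Gaussian form $f_{\theta}(z_{n,t}^{g},z_{n,0}^{g}) = -\|z_{n,t}^{g}-\sqrt{\bar{\alpha}_{t}^{g}}z_{n,0}^{g}\|^{2}/\bigl(2(1-\bar{\alpha}_{t}^{g})\bigr)+\|z_{n,t}^{g}\|^{2}/2$ by the same Bayes-plus-Gaussian device as \cref{th_1}, and then imposes the softmax over candidate states as the chosen normalization for numerical stability and gradient-based training; \cref{eq:7} is thus introduced as the parameterization of $s_{\theta}$, not derived as a consequence of \cref{eq:6}. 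Your Gibbs-measure framing, your shift-invariance and stability remarks, and your allusion to the \cref{th_1} device all match the paper's ingredients, but by attempting to deduce the softmax form from the marginal-ratio definition you take on a burden (the self-consistency identity $p_{\theta}(z_{n,t}^{g})=Z_{\theta}(z_{n,t}^{g})$) that neither your argument nor the paper discharges; to complete your route you would either have to prove that identity for the model class, or retreat to the paper's position that \cref{eq:7} is a definition of the learnable approximation rather than a theorem about \cref{eq:6}.
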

The proof is provided in Appendix A.

\vspace{-0.5em}
\subsection{Selection}
\label{sec:sle}
\vspace{-0.5em}
The selection process comprises two key steps: scan switch and selection. The scan switch mechanism captures temporal relationships between adjacent image patches (or text embeddings) by generating latent space representations with $k$ different sequential relationships, such as four image patch sequences and two text embedding sequences illustrated in~\cref{fig:2}fg. The mechanism creates $k$ temporal sequences $S=\lbrace \langle z_{1,t}^g, z_{2,t}^g, \dots, z_{i,t}^g \rangle \rbrace_k$.  

The selection step then analyzes these different sequential relationships at the current denoising step $t$ to determine which information should be focused on or ignored, thereby guiding the model's denoising direction in each diffusion step. The selection step chooses $j$ items $z_{n,t}^g$ from each sequence in $S$ according to the following \cref{th_2}. So, the selection step obtain $k$ selection sequences with different lengths, i.e., $S'=\lbrace \langle z_{j_1,t}^g, z_{j_2,t}^g, \dots, z_{j,t}^g \rangle \rbrace_k$ and $S'\in S$.  

\begin{theorem} \label{th_2}
    To achieve the optimal \textbf{s}core \textbf{e}ntropy \cite{loudiscrete} which is demonstrated on~\cref{eq:4}, the selection step choose $j$ items where each $z_{n,t}^g$ satisfies $se = 0$, i.e.,
    \begin{equation}
        s_{\theta}(z_{n,t}^g) \approx \frac{p_{data}(y)}{p_{data}(z_{n,t}^{g})}
    \label{eq:20}
    \end{equation}
\end{theorem}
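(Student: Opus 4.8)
The plan is to read \cref{th_2} as a first-order optimality statement about the score entropy \cref{eq:4}, regarded as a function of the model score $s_\theta(z_{n,t}^g)$, and to exploit the convexity that the normaliser $K$ is designed to produce. First I would isolate a single summand: fix a former state $y$, abbreviate $s := s_\theta(z_{n,t}^g) > 0$ and $a := p_{data}(y)/p_{data}(z_{n,t}^{g}) > 0$ (positivity of $a$ is guaranteed because, by \cref{th_1}, it equals an exponential), and study the per-term map $\phi(s) = s - a\log s + K(a)$ with $K(a) = a(\log a - 1)$. A short computation gives $\phi'(s) = 1 - a/s$ and $\phi''(s) = a/s^{2} > 0$, so $\phi$ is strictly convex on $(0,\infty)$ with a unique stationary point $s^{\star} = a$; evaluating there yields $\phi(a) = a - a\log a + a(\log a - 1) = 0$. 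This simultaneously re-derives the stated role of $K$ (it makes $\phi$ non-negative) and shows $\phi(s) \ge 0$ with equality \emph{iff} $s = a$.

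Next I would assemble the full objective. Since the loss weights satisfy $\omega_{z_{n,t}^{g}}^{g} \ge 0$ and $se$ is the $\omega$-weighted sum over $y$ of the convex non-negative terms $\phi$ above, we get $se \ge 0$, with $se = 0$ exactly when every active term vanishes, i.e. $s_\theta(z_{n,t}^g) = p_{data}(y)/p_{data}(z_{n,t}^{g})$ for all $y$ with $\omega_{z_{n,t}^{g}}^{g} > 0$ — which is precisely \cref{eq:20}. The selection rule is then read off from this minimiser: to push the score entropy toward its optimum at denoising step $t$, the model keeps the $j$ items $z_{n,t}^g$ of each scan sequence whose learned score (in the $softmax$-normalised form of \cref{eq:7} from \cref{th_3}) already matches the true ratio supplied by \cref{eq:5}, and discards the rest, producing the $k$ variable-length sub-sequences $S' \subseteq S$. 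The ``$\approx$'' in \cref{eq:20} is then explained as the residual of finite-capacity, finite-step training rather than exact equality.

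I expect the one genuine subtlety — and the step to handle with care — to be the mismatch between the single scalar $s_\theta(z_{n,t}^g)$ of \cref{eq:6}, which depends only on $z_{n,t}^g$, and the $y$-dependent targets $p_{data}(y)/p_{data}(z_{n,t}^{g})$: literally demanding equality for every $y$ over-determines a single number. I would resolve this by running the convexity argument at the level of the per-pair score $s_\theta(z_{n,t}^g,y) = \exp(f_\theta(z_{n,t}^g,y)) / \sum_{y'}\exp(f_\theta(z_{n,t}^g,y'))$ of \cref{th_3}, so that for each pair $(z_{n,t}^g,y)$ the minimiser $s = a$ is independently attainable; the per-term analysis above then applies verbatim, the global value $se = 0$ is simultaneously realisable, and \cref{th_2} follows by summing. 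A secondary point worth a sentence is that strict convexity — hence \emph{uniqueness} of the optimal selection — relies on $a > 0$, which is exactly what \cref{th_1} provides.
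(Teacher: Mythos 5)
Your proposal is correct, and it reaches the stated optimality condition by a genuinely different (and tighter) route than the paper. The paper's proof works by differentiating the score entropy in \cref{eq:4} with respect to $s_{\theta}(z_{n,t}^{g})$, setting the gradient to zero to obtain $s_{\theta}=p_{data}(y)/p_{data}(z_{n,t}^{g})$, and then spends most of its length on the training dynamics: it writes out the chain-rule updates for every SSM parameter in $\theta=\lbrace H_{n,t}^{g},A,B,C,D,\Delta\rbrace$ (\cref{eq:8,eq:9,eq:10,eq:11}), invokes stability of $\bar{A}$, strong convexity and Lipschitz-gradient assumptions on a squared-error surrogate of the loss, and cites the standard exponential convergence rate of gradient descent (its Algorithm 1) to argue that $se\to 0$, after which the $j$ items satisfying the matching condition are selected. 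You instead analyse each summand $\phi(s)=s-a\log s+K(a)$ directly: $\phi''(s)=a/s^{2}>0$, the unique stationary point is $s=a$, and $\phi(a)=0$, so with nonnegative weights $se\ge 0$ with equality \emph{iff} every active term matches its ratio. This buys two things the paper only asserts: a verification that $K$ really makes the loss non-negative with equality exactly at the matching scores (the paper states this but never checks it), and a clean ``necessary and sufficient'' characterisation without any appeal to optimizer convergence, which is logically extraneous to the statement of \cref{th_2}. You also flag, and repair via the pairwise softmax score of \cref{th_3}, the over-determination hidden in \cref{eq:6} (a single scalar cannot equal $y$-dependent ratios for all $y$); the paper silently lets $s_{\theta}$ carry the $y$-dependence. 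What your route does not provide, and the paper's does (albeit heuristically), is the link between the selection rule and the actual Mamba-2 parameter updates and their convergence during training — the paper uses that machinery to justify why trained $s_{\theta}$ values close to the ratio exist to be selected, whereas you attribute the ``$\approx$'' to finite-capacity training in one sentence. As a proof of the theorem as stated, your argument is sufficient and arguably cleaner; the paper's extra material is better read as motivation for the training algorithm than as part of the optimality proof.
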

The proof is provided in Appendix A.

\vspace{-0.5em}
\section{Architecture}
\vspace{-0.5em}
The neural network architecture consists of two primary components: a VAE noisy latent encoder~\cite{kingma2013auto} and a multi-step selection diffusion decoder, as illustrated in~\cref{fig:2}ab. The encoder first processes image data $X_{img}$ through patchify~\cite{dosovitskiy2020image} operations and processes text data $X_{txt}$ through tokenization based on SentencePiece with Unigram BPE~\cite{kudo2018sentencepiece} and embedding operations, then uniformly maps them to the latent space before applying forward noise.

The decoder, based on the multi-step selection diffusion model, leverages Mamba to achieve unified learning objectives while enhancing computational efficiency for processing long sequence data. It employs the SE~\cite{loudiscrete} as the unified objective for both image and text modalities during the diffusion process. During selection, the model captures sequential relationships across different temporal dimensions using various scan switches. These relationships are then efficiently processed through the selection state-space structure in the Mamba Block determining which information to focus on or ignore according to \cref{eq:20}, thereby guiding subsequent diffusion denoising steps (as shown in~\cref{fig:2}h). Finally, the reconstructed image patches and text embeddings are transformed back into their original data formats through a VAE noisy latent decoder~\cite{kingma2013auto}.

\vspace{-0.5em}
\subsection{The noisy latent encoder}
\vspace{-0.5em}
The noisy latent encoder first processes input image $X_{img}$ through patchify and processes text $X_{txt}$ through tokenization and embedding operations to obtain the patch sequence $G(X_{img}/X_ {txt})=\langle g_1,g_2,\dots,g_i \rangle$, where $g_n$ represents the $n$-th image patch or text embedding, respectively. The encoder VAE~\cite{kingma2013auto} generates Gaussian distribution parameters (mean $\mu$ and variance $\sigma$) for these patches, with a similar process applied to text embeddings, i.e., $\textit{VAE}(G) = (\mu, \sigma)$. For each image patch or text embedding $g_n$, its noise $z_n$ is a sample $s_n$ from the distribution $\mathcal{N}(\mu,\sigma)$ with the addition noise $\epsilon_n \sim \mathcal{N}(0,1)$, i.e, $z_n= s_n + \epsilon_n$. Finally, the image $X_{img}$ and text $X_{txt}$ are transformed into the noise sequence $\langle z_1,\cdots, z_i \rangle$ through the above process.

Moreover, three types of learnable padding tokens, time, category, and pad, are inserted into these noise sequences, as illustrated in~\cref{fig:2}de. The time token encodes the current diffusion step, the class token is used to learn the data category, and the pad token represents the start or end position for splitting these noise sequences.

\vspace{-0.5em}
\subsection{The multi-step selection diffusion decoder}
\vspace{-0.5em}
The decoder aims at progressively recovering the image $X_{img}$ or text $X_{txt}$ from noise sequences through two main modules: 1) the multi-step selection diffusion Mamba and 2) the VAE noisy latent decoder. 1) The Mamba is used to recover the patch sequence $\langle g_1,\cdots,g_i \rangle$ from the noise sequence $\langle z_1,\cdots, z_i \rangle$.  2) The VAE noisy latent decoder assembles patches and generates the image $\hat{X}_{img}$ or text $\hat{X}_{txt}$.

\vspace{-0.5em}
\subsubsection{Multi-step selection diffusion Mamba}
\label{sec:mssdm}
\vspace{-0.5em}

The module leverages two components, image/text scan switch and Mamba Block, to implement each denoising step in the multi-step selection diffusion model (\cref{sec:E2EMDM}). 

\textbf{The image/text scan switch component} establishes sequences with different directions to capture different temporal relationships between patches. Following Dim~\cite{teng2024dim}, we implement four distinct scan switches for images (as shown in~\cref{fig:2}f) and two for text (as shown in~\cref{fig:2}g).

\textbf{The Mamba block} is used to select patches from these different scan switch sequences and denoise the input noise $z_{n,t}^g$. The block adopts the state space architecture from Mamba-2~\cite{gu2023mamba}. According to \cref{sec:sle}, it is $s_{\theta}$, where $\theta = \lbrace H_{n,t}^{g}, A, B, C, D, \Delta \rbrace$ represent the state space in the block. The block comprises six key components: 1) linear input and output projection layers, 2) convolution kernel layer, 3) nonlinear activation layer, 4) state space model (SSM), 5) skip connection layer, and 6) normalization layer. 

1) The linear input projection layer reduces the dimensionality of the latent space noise vector while simultaneously applying initial state matrices $A$, $B$, $C$ to the linear projection of input data $z_{n,t}^g$. Additionally, the linear output projection layer represents the denoising step, which transforms the selection noise $z_{n,t}^g$ into $z_{n,t-\Delta t}^{g}$ and outputs it to the next Mamba block according to the following equation.
\begin{equation}
    z_{n,t-\Delta t}^{g} = z_{n,t}^{g}-\frac{\Delta t}{2}[f_{\theta}(z_{n,t}^{g},t)+f_{\theta}(z_{n,t-\Delta t}^{g},t-\Delta t)]
    \label{eq:15}
\end{equation}
where the equation adopts the second-order numerical method of DPM-Solver~\cite{lu2022dpm} to improve sampling accuracy. Details are provided in Appendix B.

2) The convolution kernel layer implements parallel scan switches, routing the initial linear projection of the input and the state matrix's linear projection through the SSM, as shown in~\cref{fig:2}i. The sweep down and sweep up~\cite{gu2023mamba} enable parallel computation between~\cref{eq:8,eq:9,eq:10,eq:11}. 

3) The nonlinear layer enhances model generalization.

4) The SSM lets the Mamba block $s_\theta$ approximate the actual score ratio based on \cref{th_2}. To implement the target, SSM updates the state space $\theta$ by the following equations (based on~\cref{th_2} and details in Appendix A).

\begin{equation}
    H_{n,t}^{g} = \bar{A}H_{n,t-1}^{g} + \bar{B}z_{n,t}^{g}
    \label{eq:8}
\end{equation}
\begin{equation}
    z_{n-1,t}^{g} = CH_{n,t}^{g} + Dz_{n,t}^{g}
    \label{eq:9}
\end{equation}
\begin{equation}
    \bar{A} = \exp{(\Delta A)}
    \label{eq:10}
\end{equation}
\begin{equation}
    \bar{B} = (\Delta A)^{-1}\cdot (\exp{(\Delta A)}-I)\cdot \Delta B
    \label{eq:11}
\end{equation}
where $H_{n,t}^{g}$ represents the hidden state representation, $A$ and $B$ control the evolution of hidden states and latent space noise vector inputs, respectively, $C$ governs the hidden state representation of 
the target output and $D$ manages the nonlinear skip connection for latent space noise vector inputs. $\Delta$ denotes the learnable time parameter.

5) The skip connection layer facilitates input feature reuse and mitigates model degradation.

6) The Normalization layer ensures training stability.

According to \cref{eq:20} in \cref{th_2} and~\cref{eq:4}, the goal of training the Mamba block is:
\begin{equation}
    L_{se} = \mathbb{E}_{z_{n,0}^{g}\sim p_{0},z_{n}^{g}\sim p(\cdot|z_{n,0}^{g})} se = 0
    \label{eq:mamba}
\end{equation}

\vspace{-0.8em}
\subsubsection{The noisy latent decoder}
After applying the diffusion-based denoising process, the recovered latent variable $z_{n,0}^{g}$ is passed to the VAE decoder~\cite{kingma2013auto} as illustrated in~\cref{fig:2}c.
For image reconstruction, the decoder applies an $\ell_2$ loss:
\begin{equation}
    L_{rec}^{img} = \mathbb{E}_{z_{n,0}^{g} \sim q_{\phi}(z | X)}\| X_{img} - \hat{X}_{img} \|^2.
\end{equation}
where $q_{\phi}(z | X)$ represents the posterior distribution of the VAE encoder.

For text, the decoder minimizes the cross-entropy loss:
\begin{equation}
    L_{rec}^{txt} = - \mathbb{E}_{z_{n,0}^{g} \sim q_{\phi}(z | X)} \sum_{t} p(X_{txt}^{(t)} | z_{n,0}^{g}) \log p_{\psi}(\hat{X}_{txt}^{(t)} | z_{n,0}^{g}).
\end{equation}
where $p(X_{txt}^{(t)} | z_{n,0}^{g})$ represents the probability distribution of real text data under the condition of latent variable $z_{n,0}^{g}$. And $p_{\psi}(\hat{X}_{txt}^{(t)} | z_{n,0}^{g})$ represents the probability distribution of the text token generated by the VAE decoder under the condition of the latent variable $z_{n,0}^{g}$.

Besides, a KL divergence regularizes the latent space:
\begin{equation}
    L_{KL} = D_{KL} \left( q_{\phi}(z | X) \| p(z) \right).
\end{equation}
where $p(z)$ represents the prior distribution of the latent variable by VAE, which is assumed to be a standard Gaussian distribution $\mathcal{N}(0,I)$ to regularize the latent variable space and enable it to have smooth generation capabilities.

The final optimization objective integrates VAE reconstruction, KL divergence and SE:
\begin{equation}
    L_{total} = L_{rec}^{img} + L_{rec}^{txt} + \beta L_{KL} + \lambda L_{se}.
\end{equation}
\vspace{-2.5em}
\section{Experiments}
\label{sec:exp}
\vspace{-0.3em}
\subsection{Experimental Setup}
\vspace{-0.2em}
\begin{figure}[t]
  \centering
   \includegraphics[width=1\linewidth]{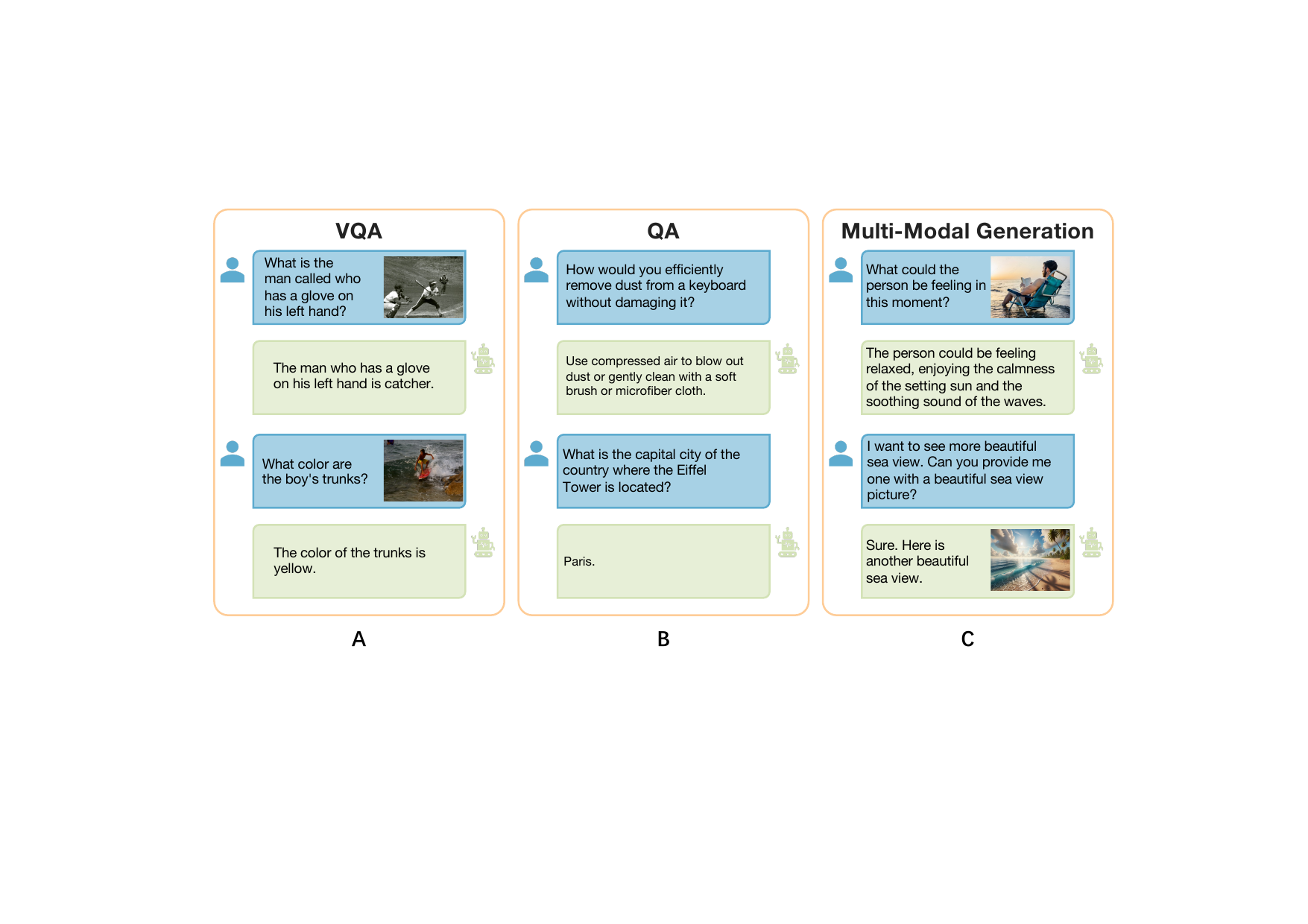}
   \caption{VQA, QA and Multi-Modal generation test from MDM. The results of VQA are part of VQAv2~\cite{goyal2017making}. The QA results are part of PIQA~\cite{bisk2020piqa} and MMLU~\cite{hendrycks2020measuring}. The Multi-Modal generation results are tested with ground-truth data.}
   \vspace{-1em}
   \label{fig:3}
\end{figure}
\begin{figure}[t]
          \centering
           \includegraphics[width=1\linewidth]{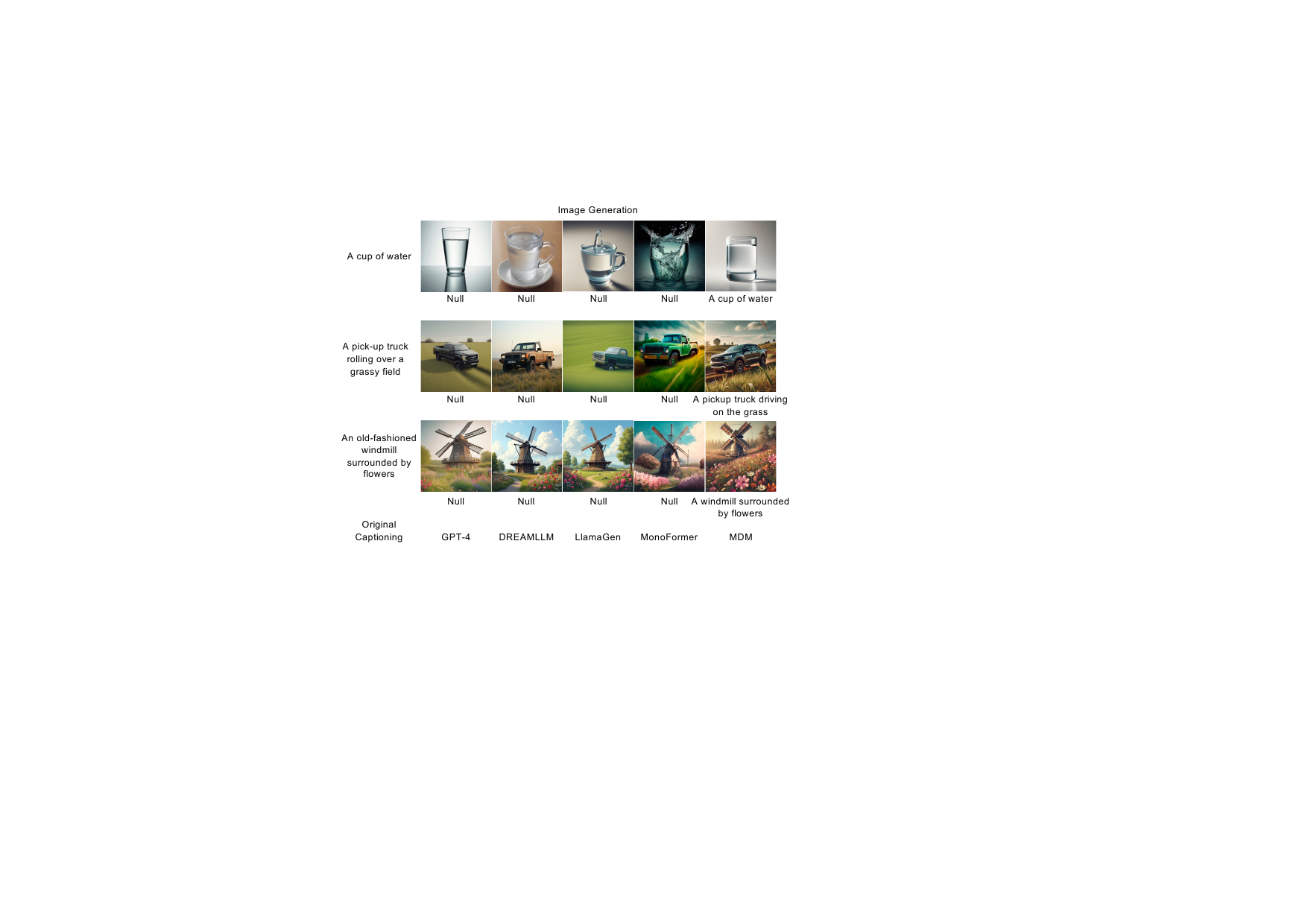}
           \caption{Comparison between each model on generating captioning and image results on COCO dataset. Unlike other models, MDM generates both image and caption data simultaneously.}
           \vspace{-1.5em}
           \label{fig:4}
\end{figure}
\textbf{Model configuration.} 
Our model applies a VAE~\cite{kingma2013auto} as the noisy latent encoder and decoder. Moreover, it integrates the DiM selection state space ~\cite{teng2024dim} in each Mamba block as the diffusion decoder. The resulting model contains 7 billion parameters, with 49 Mamba blocks in the multi-step selection diffusion decoder, each having a dimension of 2048 (Details of parameter settings listed in Appendix C).
\begin{table*}[t]
    \centering
    \fontsize{7}{9}\selectfont
    \renewcommand{\arraystretch}{1.1}  
    \begin{tabularx}{\textwidth}{lcc|*{4}{Y}|*{2}{Y}}
        \toprule
        \multirow{2}{*}{\textbf{Model}} & \multirow{2}{*}{\textbf{Arc}} & \multirow{2}{*}{\textbf{Params}} 
        & \multicolumn{4}{c|}{\textbf{Image Generation with CFG}} 
        & \multicolumn{2}{c}{\textbf{Text-to-Image Generation}} \\
        \cmidrule(lr){4-7} \cmidrule(lr){8-9}
        & & & FID ↓ & IS ↑ & Pre ↑ & Re ↑ & FID ↓ & Gen Eval ↑ \\
        \midrule
        Imagen~\cite{saharia2022photorealistic} & Diff & 7.3B & - & - & - & - & 7.27 & - \\
        ADM~\cite{dhariwal2021diffusion} & Diff & 554M & 10.94 & 101.0 & 0.69 & \textcolor{silver}{0.63} & - & - \\
        CDM~\cite{ho2022cascaded} & Diff & - & 4.88 & 158.7 & - & - & - & - \\
        LDM~\cite{rombach2022high} & Diff & 400M & 3.60 & 147.6 & \textcolor{gold}{0.87} & \textcolor{gold}{0.68} & - & 0.43 \\
        DiT-XL/2~\cite{peebles2023scalable} & Diff & 675M & \textcolor{gold}{2.27} & \textcolor{bronze}{278.2} & 0.83 & 0.57 & - & - \\
        SDXL~\cite{podell2023sdxl} & Diff & 3.4B & - & - & - & - & \textcolor{silver}{4.40} & 0.55 \\
        SD-3~\cite{esser2024scaling} & Diff & 12.7B & - & - & - & - & - & \textcolor{silver}{0.68} \\
        \midrule
        VQGAN~\cite{esser2021taming} & AR & 227M & 18.65 & 80.4 & 0.78 & 0.26 & - & - \\
        ViT-VQGAN~\cite{yu2021vector} & AR & 1.7B & 4.17 & 175.1 & - & - & - & - \\
        \midrule
        NExT-GPT~\cite{wu2023next} & AR & 7B & - & - & - & - & 10.07 & - \\
        Chameleon~\cite{team2024chameleon} & AR & 7B & - & - & - & - & 26.74 & 0.39 \\
        LlamaGen~\cite{sun2024autoregressive} & AR & 3.1B & 2.81 & \textcolor{gold}{311.5} & \textcolor{bronze}{0.84} & 0.54 & \textcolor{gold}{4.19} & - \\
        Transfusion~\cite{zhou2024transfusion} & AR+Diff & 7.3B & - & - & - & - & 6.78 & 0.63 \\
        MonoFormer~\cite{zhao2024monoformer} & AR+Diff & 1.1B & \textcolor{bronze}{2.57} & 272.6 & \textcolor{bronze}{0.84} & 0.56 & - & - \\
        Dual-DiT~\cite{li2024dual} & Diff & 2B & - & - & - & - & 9.40 &\textcolor{bronze}{0.65} \\
        JanusFlow~\cite{ma2024janusflow} & AR+Diff & 1.3B & - & - & - & - & - & \textcolor{gold}{0.70} \\
        Show-O~\cite{xie2024show} & AR+Diff & 1.3B & - & - & - & - & 9.24 & \textcolor{silver}{0.68} \\
        \midrule
        \textbf{MDM} & Diff & 7B & \textcolor{silver}{2.49} & \textcolor{silver}{281.4} & \textcolor{silver}{0.86} & \textcolor{bronze}{0.59} & \textcolor{bronze}{5.91} & \textcolor{silver}{0.68} \\
        \bottomrule
    \end{tabularx}
    \caption{Performance on ImageNet and COCO 256$\times$256. FID, IS, Pre, and Re stands for Frechet Inception Distance, Inception Score, Precision, and Recall, respectively.}
    \vspace{-1em}
    \label{tab:1}
\end{table*}
\begin{table*}[t]
    \centering
    \fontsize{7}{9}\selectfont
    \renewcommand{\arraystretch}{1.1}  
    \begin{tabularx}{\textwidth}{l|*{2}{Y}|*{3}{Y}|*{7}{Y}|*{3}{Y}}
        \toprule
        \multirow{2}{*}{\textbf{Model}} 
        & \multicolumn{2}{c|}{\textbf{IC}} 
        & \multicolumn{3}{c|}{\textbf{VQA}} 
        & \multicolumn{7}{c|}{\textbf{Text Comprehension and Reasoning}} 
        & \multicolumn{3}{c}{\textbf{Math and World}} \\
        \cmidrule(lr){2-3} \cmidrule(lr){4-6} \cmidrule(lr){7-13} \cmidrule(lr){14-16}
        & Flickr & COCO & VQAv2 & VizWiz & OK 
        & HS & OBQA & WG & ARCE & ARCC & BoolQ & PIQA 
        & GSM8k & MATH & MMLU \\
        \midrule
        Llama-2~\cite{touvron2023llama} (7B) & - & - & - & - & - 
        & 77.2 & \textcolor{gold}{58.6} & \textcolor{gold}{78.5} & \textcolor{bronze}{75.2} & 45.9 & \textcolor{bronze}{77.4} & 78.8 
        & 14.6 & 2.5 & 45.3 \\
        Mistral~\cite{jiang2023mistral} (7B) & - & - & - & - & - 
        & \textcolor{bronze}{81.3} & - & \textcolor{silver}{75.3} & \textcolor{gold}{80.0} & \textcolor{gold}{55.5} & \textcolor{gold}{84.7} & \textcolor{silver}{83.0} 
        & \textcolor{bronze}{52.1} & \textcolor{bronze}{13.1} & \textcolor{bronze}{60.1} \\
        Flamingo~\cite{alayrac2022flamingo} (80B) & 75.1 & 113.8 & 67.6 & - & - 
        & - & - & - & - & - & - & - 
        & - & - & - \\
        Gemini Pro~\cite{team2023gemini} & \textcolor{bronze}{82.2} & 99.8 & 71.2 & - & - 
        & \textcolor{silver}{84.7} & - & - & - & - & - & - 
        & \textcolor{silver}{86.5} & \textcolor{silver}{32.6} & \textcolor{silver}{71.8} \\
        GPT4V~\cite{gpt4v} & 55.3 & 78.5 & \textcolor{silver}{77.2} & - & - 
        & \textcolor{gold}{95.3} & - & - & - & - & - & - 
        & \textcolor{gold}{92.0} & \textcolor{gold}{52.9} & \textcolor{gold}{86.4} \\
        InstructBLIP~\cite{liu2024visual} (7B) & \textcolor{silver}{82.4} & 102.2 & - & 33.4 & 33.9 
        & - & - & - & - & - & - & - 
        & - & - & - \\
        mPLUG-Owl~\cite{ye2023mplug} (7B) & 80.3 & 119.3 & - & 39.0 & - 
        & - & - & - & - & - & - & - 
        & - & - & - \\
        TinyLlama~\cite{zhang2024tinyllama} (1.1B) & - & - & - & - & - 
        & 59.2 & 36.0 & 59.1 & 55.3 & 30.1 & 57.8 & 73.3 
        & - & - & - \\
        Pythia~\cite{biderman2023pythia} (12B) & - & - & - & - & - 
        & 52.0 & 33.2 & 57.4 & 54.0 & 28.5 & 63.3 & 70.9 
        & - & - & - \\
        DREAMLLM~\cite{dong2023dreamllm}(7B) & - & 115.4 & 56.6 & \textcolor{bronze}{45.8} & 44.3 
        & - & - & - & - & - & - & - 
        & - & - & - \\
        Emu~\cite{sun2023generative}(7B) & - & 117.7 & 40.0 & 35.4 & 34.7 
        & - & - & - & - & - & - & - 
        & - & - & - \\
        \midrule
        Chameleon~\cite{team2024chameleon}(34B) & 74.7 & \textcolor{bronze}{120.2} & 66.0 & - & - 
        & 74.2 & \textcolor{silver}{51.0} & 70.4 & \textcolor{silver}{76.1} & \textcolor{bronze}{46.5} & \textcolor{silver}{81.4} & 79.6 
        & 41.6 & 11.5 & 52.1 \\
        NExT-GPT~\cite{wu2023next}(7B) & \textcolor{gold}{84.5} & \textcolor{gold}{124.9} & 66.7 & \textcolor{gold}{48.4} & \textcolor{gold}{52.1} 
        & - & - & - & - & - & - & - 
        & - & - & - \\
        Transfusion~\cite{zhou2024transfusion}(7B) & - & 33.7 & - & - & - 
        & - & - & - & - & - & - & - 
        & - & - & - \\
        MonoFormer~\cite{zhao2024monoformer}(1.1B) & - & - & - & - & - 
        & 50.6 & 37.2 & 56.9 & 48.2 & 31.5 & 62.3 & 71.2 
        & - & - & - \\
        Dual-DiT~\cite{li2024dual}(2B) & - & 56.2 & 60.1 & 29.9 & 25.3 & - & - & - & - & - & - & - & - & - & - \\
        JanusFlow~\cite{ma2024janusflow}(1.3B) & - & - & \textcolor{gold}{79.8} & - & - & - & - & - & - & - & - & - & - & - & - \\
        Show-O~\cite{xie2024show}(1.3B) & 67.6 & - & \textcolor{bronze}{74.7} & - & - & - & - & - & - & - & - & - & - & - & - \\
        \midrule
        \textbf{MDM} (7B) & 62.4 & 109.6 & 60.3 & 39.8 & \textcolor{bronze}{47.1} 
        & 70.6 & 41.5 & 68.8 & 55.1 & 46.2 & 65.7 & \textcolor{bronze}{79.9} 
        & 40.5 & 12.1 & 54.4 \\
        \textbf{InstructMDM} (7B) & 75.2 & \textcolor{silver}{122.1} & 66.7 & \textcolor{silver}{46.3} & \textcolor{silver}{51.6} 
        & 74.8 & \textcolor{bronze}{48.3} & \textcolor{bronze}{74.9} & 65.4 & \textcolor{silver}{47.1} & 71.5 & \textcolor{gold}{83.7} 
        & 46.0 & \textcolor{bronze}{13.1} & 59.2 \\
        \bottomrule
    \end{tabularx}
    \caption{Performance on image-to-text and text-to-text tasks. The evaluation of image captioning (IC) and VQA is CIDEr and answer accuracy $\%$ (Flickr is evaluated on 30K and OK represents OKVQA).}
    \vspace{-1.9em}
    \label{tab:2}
\end{table*}

Before the training MDM process, we trained a tokenization model based on SentencePiece (Unigram BPE)~\cite{kudo2018sentencepiece}. The tokenization model can help the model construct a stable text latent variable representation, thereby optimizing the forward diffusion and reverse denoising process. See Appendix D for detailed experimental settings. 

In the training process, we import the DDPM scheduler~\cite{ho2020denoising} and DPM-Solver~\cite{lu2022dpm} to improve the sampling efficiency in the diffusion model. We then use the AdamW optimizer without weight decay, maintaining a constant learning rate of 0.0001. Meanwhile, we keep an EMA of the model weights with a coefficient of 0.9999.

\textbf{Baseline and dataset.}
Our evaluation encompasses four tasks: image generation with classifier‑free guidance~\cite{ho2022classifier} (CFG), text-to-image, image-to-text, and text-to-text generation. For the baseline model training, we train MDM on ImageNet~\cite{deng2009imagenet}, JourneyDB~\cite{sun2024journeydb} and UltraChat~\cite{ding2023enhancing}.
\begin{figure*}[t]
      \centering
       \includegraphics[width=0.8\linewidth]{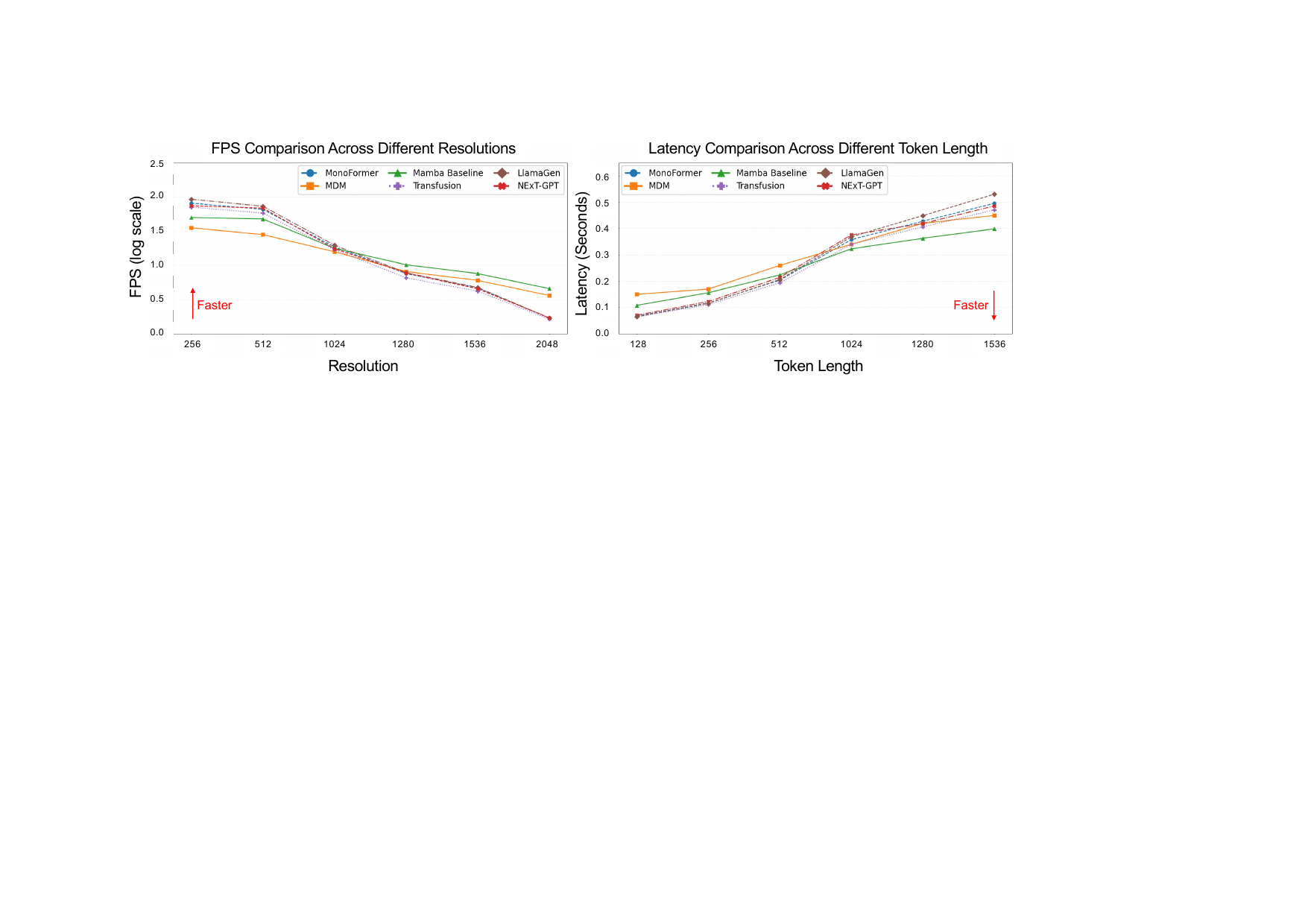}
       \caption{Comparison between Mamba Baseline, MonoFormer, and ours MDM on inference speed test. The left shows the inference speed of the model FPS at different resolutions. The right shows the inference speed of the model latency at different token lengths.}
       \vspace{-1em}
       \label{fig:5}
\end{figure*}

For the image generation and the text-to-image task at $256\times 256$ resolution, we compare the MDM baseline model against established baselines across three categories: diffusion models (Imagen~\cite{saharia2022photorealistic}, ADM~\cite{dhariwal2021diffusion}, CDM~\cite{ho2022cascaded}, LDM~\cite{rombach2022high}, DiT-XL/2~\cite{peebles2023scalable}, SDXL~\cite{podell2023sdxl}, and SD-3~\cite{esser2024scaling}), autoregressive models (VQGAN~\cite{esser2021taming} and ViT-VQGAN~\cite{yu2021vector}), and end-to-end multi-modal models (NExT-GPT~\cite{wu2023next}, Chameleon~\cite{team2024chameleon}, LlamaGen~\cite{sun2024autoregressive}, Transfusion~\cite{zhou2024transfusion}, MonoFormer~\cite{zhao2024monoformer}, Dual-DiT~\cite{li2024dual}, JanusFlow~\cite{ma2024janusflow} and Show-O~\cite{xie2024show}). For the image generation task, we evaluate performance on ImageNet~\cite{deng2009imagenet} using four metrics: Frechet Inception Distance (FID), Inception Score (IS), and Precision/Recall. For the text-to-image task, we evaluate performance on COCO~\cite{karpathy2015deep} using FID and Gen Eval~\cite{ghosh2023geneval}.

For the image-to-text task (image captioning and vision question answering, VQA) and text-to-text task, we employ MDM baseline model and MDM instruction model by visual instruction tuning~\cite{liu2024visual} on multiple datasets: COCO~\cite{karpathy2015deep}, GQA~\cite{hudson2019gqa}, OCR-VQA~\cite{mishraICDAR19}, TextVQA~\cite{singh2019towards}, and VisualGenome~\cite{krishna2017visual}. We evaluate the model against two groups of baselines: traditional models 
and end-to-end multi-modal models.
Performance evaluation of image captioning is conducted on Flickr 30K~\cite{young2014image} and COCO~\cite{karpathy2015deep} datasets using the Consensus-based Image Description Evaluation (CIDEr) metric. And performance evaluation of VQA is conducted on VQAv2~\cite{goyal2017making}, VizWiz~\cite{gurari2018vizwiz}, and OKVQA~\cite{marino2019ok} using answer accuracy rate as the evaluation metric.

For the text-to-text task, we evaluate the model on text comprehension and reasoning tasks using HellaSwag~\cite{zellers2019hellaswag}, OpenBookQA~\cite{mihaylov2018can}, Wino-Grande~\cite{sakaguchi2021winogrande}, ARCEasy, ARCChallenge~\cite{clark2018think}, BoolQ~\cite{clark2019boolq}, and PIQA~\cite{bisk2020piqa}. We also evaluate the model on math and world knowledge tasks using GSM8K~\cite{cobbe2021training}, MATH~\cite{hendrycks2021measuring}, and MMLU~\cite{hendrycks2020measuring}. The evaluation metrics for all the tasks are accuracy rates.

\vspace{-0.1em}
\subsection{Experimental Results}
\textbf{Image Generation.} In the image generation task on ImageNet, MDM achieves top-three rankings across all evaluation metrics: second in FID, IS, and Precision, and third in Recall when compared against one-modal diffusion models and end-to-end multi-modal models (see \cref{tab:1}). MDM demonstrates superior overall performance, notably surpassing other end-to-end multi-modal models in three of the four metrics. In the text-to-image task, we tested the model on the COCO dataset to generate both image and caption data. For the image generation results, we evaluated the FID and Gen Eval performance indicators of the model-generated images. MDM still achieved the top three performance levels and achieved SOTA on Gen Eval.
\begin{table}[t]
    \centering
        \renewcommand{\arraystretch}{1}  
        \fontsize{7}{9}\selectfont
          \begin{tabularx}{\columnwidth}{l|Y|Y|Y}
            \toprule
            \textbf{Model} 
              & \textbf{Image/Text Scan Switch} 
              & \textbf{FPS w log scale$\uparrow$} 
              & \textbf{FID$\downarrow$} \\
            \midrule
            Model w Mamba & \ding{172}\ding{173}\ding{174}\ding{175}/\ding{172}\ding{173} & 1.357 & \textbf{2.49} \\
            Model w Mamba & \ding{172}\ding{173}/\ding{172} & 1.405 & 3.96 \\
            Model w Transformer & - & \textbf{1.914} & 6.72 \\
            \bottomrule
        \end{tabularx}
        \caption{Ablation on ImageNet 256$\times$256 image generation.}
        \label{tab:4}
        \vspace{-1em}
\end{table}

\textbf{Text Generation}\textbf{.} In the image-to-text task image captioning, according to the settings on image generation on the COCO dataset, we tested the caption data of the model based on the model outputting both text and image data using the CIDEr indicator. The results showed that MDM ranked second among all models, as shown in~\cref{tab:2}. While in task VQA, MDM achieves competitive performance, surpassing several traditional models including InstructBLIP, mPLUG-Owl, DREAMLLM, and Emu, although it still trails behind top-performing models in the field as shown in~\cref{tab:2}. In the text-to-text generation task, as shown in~\cref{tab:2}, MDM and the other end-to-end multi-modal models perform worse than well-known traditional models. This discrepancy may be attributed to the fact that these end-to-end models have some deviations in multimodal fusion and learning because they abandon multiple language encoders, visual encoders, and multimodal fusion encoders. However, when compared with the other two end-to-end models, MDM excels, outperforming MonoFormer and surpassing Chameleon on seven out of ten datasets.
\vspace{-0.5em}
\subsection{Discussion}
\vspace{-0.3em}
\subsubsection{Performance Analysis}
\label{sec:EA}
As demonstrated in~\cref{fig:3}, MDM shows the ability to generate image and text simultaneously in multiple rounds of dialogue and perform well in QA\&VQA. Some results even exceed those of GPT-4V, particularly evident in the second and third rows of~\cref{fig:4} which is a hybrid output process for the MDM model. Due to this, we set the model to generate corresponding images for the description text while simultaneously generating image captioning.

This enhanced performance stems from MDM's multi-step selection diffusion decoder, which leverages Mamba's integrated selection and denoising capabilities to maintain focused attention on both textual and visual details. Validating our complexity analysis in Appendix E, MDM demonstrates superior efficiency compared to end-to-end Transformer models when processing long sequences, as shown in~\cref{fig:5}, particularly outperforming other end-to-end multi-modal models for sequences exceeding 1280 tokens.

\vspace{-0.3em}
\subsubsection{Ablations}
\vspace{-0.3em}
\label{sec:ablation}
Our ablation studies examine the impact of both the selection process and Mamba block components. Reducing the number of image/text scan switch sequences from 6 ('\ding{172}\ding{173}\ding{174}\ding{175}/\ding{172}\ding{173}') to 3 ('\ding{172}\ding{173}/\ding{172}'), as shown in~\cref{tab:4}, improves inference speed but degrades image quality, as fewer scan switch sequences limit the model's ability to capture accurate information in complex sequences. Additionally, replacing the Mamba block with the Transformer further deteriorates output image quality, suggesting Mamba's temporal network architecture is better suited for representing diffusion relationships during the denoising process.
\vspace{-0.5em}
\section{Conclusion}
\label{sec:con}
\vspace{-0.5em}
This paper introduces MDM (Multi-Modal Diffusion Mamba), a novel end-to-end architecture that significantly enhances multi-modal processing through two key innovations: a unified diffusion objective and an efficient selection mechanism leveraging Mamba's state-space structure. By integrating variational autoencoder with multi-step selection diffusion, MDM achieves SOTA overall performance in image generation and demonstrates remarkable versatility across various tasks, including image-to-text, text-to-text and text-image-to-text-image. Our comprehensive experiments illustrate that MDM consistently surpasses traditional end-to-end multi-modal models, particularly in processing high-resolution images and long-sequence text, while maintaining computational efficiency. The model's ability to unify different modalities under a single objective, coupled with its superior management of temporal relationships in the diffusion process, establishes a promising direction for future multi-modal architecture.
{
    \small
    \bibliographystyle{ieeenat_fullname}
    \bibliography{main}
}
\clearpage
\setcounter{page}{1}
\maketitlesupplementary

\section{Appendix A}
\label{sec:AppA}
\subsection{Theorem 1}
\label{sec:AppA_th1}
\setcounter{theorem}{0}
\begin{theorem} \label{th_1_app}
    According to Bayes' theorem and the Gaussian distribution density formula, the following calculation relationship of $\frac{p_{data}(y)}{p_{data}(z_{n,t}^{g})}$ is obtained:
    \setcounter{equation}{0}
    \begin{equation}
        \frac{p_{data}(y)}{p_{data}(z_{n,t}^{g})}=\exp{\Bigg( \frac{\|z_{n,t}^{g}\|^{2}}{2}-\frac{\|z_{n,t}^{g}-\sqrt{\bar{\alpha}_{t}^{g}}z_{n,0}^{g}\|^{2}}{2(1-\bar{\alpha}_{t}^{g})}\Bigg)}
        \label{eq:100}
    \end{equation}
\end{theorem}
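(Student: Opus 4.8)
The plan is to recognize the left-hand side as a ratio of two Gaussian densities and to collapse it, via Bayes' theorem, onto quantities already fixed by the forward process in \cref{eq:1,eq:2}. First I would pin down the denominator: running the forward diffusion into its stationary regime sends $\bar{\alpha}_{t}^{g}\to 0$, so the marginal law of the fully corrupted variable is $p_{data}(z_{n,t}^{g})=\mathcal{N}(z_{n,t}^{g};0,I)\propto\exp(-\frac{1}{2}\|z_{n,t}^{g}\|^{2})$; taking its reciprocal is what produces the first term $\frac{1}{2}\|z_{n,t}^{g}\|^{2}$ inside the exponent of \cref{eq:100}.

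Next I would treat the numerator. The relevant ``former'' state is the clean sample $z_{n,0}^{g}$, and by \cref{eq:2} the mass that the forward kernel places at $z_{n,t}^{g}$ given $z_{n,0}^{g}$ is $p(z_{n,t}^{g}\mid z_{n,0}^{g})=\mathcal{N}(z_{n,t}^{g};\sqrt{\bar{\alpha}_{t}^{g}}z_{n,0}^{g},(1-\bar{\alpha}_{t}^{g})I)\propto\exp(-\frac{\|z_{n,t}^{g}-\sqrt{\bar{\alpha}_{t}^{g}}z_{n,0}^{g}\|^{2}}{2(1-\bar{\alpha}_{t}^{g})})$, which is exactly the second term. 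I would then invoke Bayes' theorem in the form $p_{data}(y)\,p_{data}(z_{n,t}^{g}\mid y)=p_{data}(z_{n,t}^{g})\,p_{data}(y\mid z_{n,t}^{g})$ to express $p_{data}(y)/p_{data}(z_{n,t}^{g})$ entirely through forward and posterior conditionals, substitute the two Gaussian forms above, cancel the shared $(2\pi)^{d/2}$ factors, and read off \cref{eq:100} after expanding the quadratic forms.

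The step I expect to be the main obstacle --- and the one I would spell out most carefully --- is the treatment of the normalizing constants. The reference density carries covariance $I$ whereas the transition density carries covariance $(1-\bar{\alpha}_{t}^{g})I$, so a literal ratio of normalized Gaussians leaves a stray $(1-\bar{\alpha}_{t}^{g})^{-d/2}$ prefactor that is absent from \cref{eq:100}. I would dispose of it by appealing to the score-entropy construction of \cite{loudiscrete}: only the state-dependent part of the density ratio feeds into the objective in \cref{eq:4} (the constant being absorbed into the loss weight $\omega_{z_{n,t}^{g}}^{g}$ and the normalizer $K(\cdot)$), so it is legitimate to work with the unnormalized kernels, in which case the dimension-dependent factor never appears and the identity holds verbatim. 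Everything that remains --- expanding $\|z_{n,t}^{g}-\sqrt{\bar{\alpha}_{t}^{g}}z_{n,0}^{g}\|^{2}$, checking signs, and merging the two exponents into one --- is routine algebra.
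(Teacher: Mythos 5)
Your proof follows essentially the same route as the paper's: apply Bayes' theorem, take the forward Gaussian kernel $p(z_{n,t}^{g}\mid z_{n,0}^{g})=\mathcal{N}(z_{n,t}^{g};\sqrt{\bar{\alpha}_{t}^{g}}z_{n,0}^{g},(1-\bar{\alpha}_{t}^{g})I)$ for the numerator and a standard-normal marginal for the denominator, and read off \cref{eq:100} from the ratio of exponentials. The only differences are minor: the paper gets $p(z_{n,t}^{g})=\mathcal{N}(0,I)$ exactly by assuming $p_{data}(z_{n,0}^{g})=\mathcal{N}(0,I)$ and convolving, whereas you invoke the stationary regime $\bar{\alpha}_{t}^{g}\to 0$, and the paper silently drops the $(1-\bar{\alpha}_{t}^{g})^{-d/2}$ normalizing prefactor that you explicitly note and absorb into the loss weights --- a point on which your write-up is actually more careful.
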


\begin{proof}
    According to~\cite{loudiscrete}, from Bayes' theorem, we express the posterior probability as:
    \begin{equation}
        p_{data}(z_{n,0}^{g} | z_{n,t}^{g}) = \frac{p(z_{n,t}^{g} | z_{n,0}^{g}) p_{data}(z_{n,0}^{g})}{p(z_{n,t}^{g})}.
    \end{equation}
    Rearranging, we obtain:
    \begin{equation}
        \frac{p_{data}(z_{n,0}^{g})}{p_{data}(z_{n,t}^{g})} = \frac{p(z_{n,t}^{g} | z_{n,0}^{g})}{p(z_{n,t}^{g})}.
    \end{equation}
    Given the real data $z_{n,0}^{g}$, the probability of the diffused noise state is $p(z_{n,t}^{g} | z_{n,0}^{g})$. $p(z_{n,t}^{g})$ is the marginal distribution of all possible $z_{n,0}^{g}$ after diffusion.
    
    The forward noise addition process in the diffusion model is defined as follows:
    \begin{equation}
        z_{n,t}^{g} = \sqrt{\bar{\alpha}_{t}^{g}} z_{n,0}^{g} + \sqrt{1 - \bar{\alpha}_{t}^{g}} \epsilon_{n,t}^{g}, \epsilon_{n,t}^{g} \sim \mathcal{N}(0, I),
    \end{equation}
    and it can be seen that given $z_{n,0}^{g}$, $z_{n,t}^{g}$ obeys the Gaussian distribution:
    \begin{equation}
        p(z_{n,t}^{g} | z_{n,0}^{g}) = \mathcal{N}(z_{n,t}^{g} ; \sqrt{\bar{\alpha}_{t}^{g}} z_{n,0}^{g}, (1 - \bar{\alpha}_{t}^{g}) I),
    \end{equation}
    where this conditional probability indicates that $z_{n,t}^{g}$ is a Gaussian distribution with $\sqrt{\bar{\alpha}_{t}^{g}} z_{n,0}^{g}$ as mean and $(1 - \bar{\alpha}_{t}^{g}) I$ as variance.

    Then, for the marginal distribution $p(z_{n,t}^{g})$ can be calculated by integration:
    \begin{equation}
        p(z_{n,t}^{g}) = \int p(z_{n,t}^{g} | z_{n,0}^{g}) p_{data}(z_{n,0}^{g}) dz_{n,0}^{g}.
    \end{equation}
    Typically, we assume that the underlying distribution of the data follows a standard Gaussian:
    \begin{equation}
        p_{data}(z_{n,0}^{g}) = \mathcal{N}(z_{n,0}^{g} ; 0, I),
    \end{equation}
    since the convolution of two Gaussian distributions is a Gaussian distribution, $p(z_{n,t}^{g})$ is still a Gaussian distribution:
    \begin{equation}
        p(z_{n,t}^{g}) = \mathcal{N}(z_{n,t}^{g} ; 0, I).
    \end{equation}

    Combining the above derivation, we get:
    \begin{equation}
        \frac{p_{data}(z_{n,0}^{g})}{p_{data}(z_{n,t}^{g})} = \frac{p(z_{n,t}^{g} | z_{n,0}^{g})}{p(z_{n,t}^{g})}.
    \end{equation}
    Then, substitute into the Gaussian distribution density formula:
    \begin{equation}
        \frac{p(z_{n,t}^{g} | z_{n,0}^{g})}{p(z_{n,t}^{g})} = \frac{\exp{(-\frac{\|z_{n,t}^{g}-\sqrt{\bar{\alpha}_{t}^{g}}z_{n,0}^{g}\|^{2}}{2(1-\bar{\alpha}_{t}^{g})})}}{\exp{(-\frac{\|z_{n,t}^{g}\|^{2}}{2})}}.
    \end{equation}
    Further sorting, thus, we derive~\cref{eq:100}, completing the proof.
\end{proof}

\subsection{Theorem 2}
\label{sec:AppA_th3}
\begin{theorem} \label{th_3_app}
    Given the denoising process modeled by a score-based probability ratio function $s_{\theta}(z_{n,t}^{g})$, defined as $s_{\theta}=\frac{p_{data}(z_{n,0}^g)}{p_{data}(z_{n,t}^g)}$, this paper defines a learnable approximation using a parameterized score function $f_{\theta}$, such that the probability ratio can be estimated as:
    \begin{equation}
        s_{\theta}(z_{n,t}^{g}) = \frac{\exp{(f_{\theta}(z_{n,t}^{g},z_{n,0}^{g}))}}{\sum_{y\in z_{n,0:t-1}^{g}}\exp{(f_{\theta}(z_{n,t}^{g},y))}},
    \label{eq:300}
    \end{equation}
\end{theorem}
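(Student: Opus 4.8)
The plan is to read off~\cref{eq:7} by choosing an explicit, self-normalizing parameterization of the two pieces in the score ratio~\cref{eq:6}. I would let $f_{\theta}(z_{n,t}^{g},\cdot)$ play the role of a learnable log-score: given the corrupted state $z_{n,t}^{g}$, the model's unnormalized estimate of the probability mass at the candidate clean state $z_{n,0}^{g}$ is taken to be $\exp\!\big(f_{\theta}(z_{n,t}^{g},z_{n,0}^{g})\big)$, which is the numerator $p_{\theta}(z_{n,0}^{g})$ in~\cref{eq:6}. Because the trajectory $z_{n,0:t}^{g}$ is a finite set of states, the denominator $p_{\theta}(z_{n,t}^{g})$ — the reference mass against which the score is measured — is supplied by the law of total probability over the reachable earlier states, $p_{\theta}(z_{n,t}^{g})=\sum_{y\in z_{n,0:t-1}^{g}}\exp\!\big(f_{\theta}(z_{n,t}^{g},y)\big)$. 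Substituting these two expressions into~\cref{eq:6} gives~\cref{eq:7} directly, and by construction the result is non-negative, sums to one over the candidate states, and is a smooth function of $\theta$, so it is exactly the $\mathrm{softmax}$-normalized form that admits stable gradient optimization.

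The next step is to verify that this parameterization is consistent with the diffusion objective rather than an arbitrary choice. For that I would substitute~\cref{eq:7} into the score entropy $se$ of~\cref{eq:4}: the $-\frac{p_{data}(y)}{p_{data}(z_{n,t}^{g})}\log s_{\theta}$ term becomes a cross-entropy between the true score distribution and the softmax, the convex normalizer $K(a)=a(\log a-1)$ keeps $se\ge 0$, and differentiating in $f_{\theta}$ shows the unique stationary point is $s_{\theta}(z_{n,t}^{g})\propto \frac{p_{data}(z_{n,0}^{g})}{p_{data}(z_{n,t}^{g})}$, i.e.\ the optimality condition~\cref{eq:20} of~\cref{th_2}. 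I would also record that the closed form of the true ratio is already available from~\cref{th_1}, so at the optimum the learned $f_{\theta}$ simply reproduces that Gaussian log-ratio.

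The main obstacle is precisely the normalization step: the right-hand side of~\cref{eq:7} is a softmax coordinate and therefore lies in $(0,1)$, whereas the target ratio $p_{data}(y)/p_{data}(z_{n,t}^{g})$ in~\cref{eq:5} can exceed one, so the identity cannot hold literally without a scale factor. The resolution I would give is that the score-entropy loss~\cref{eq:4} is invariant under replacing $s_{\theta}$ by a positive state-dependent multiple — the per-state weight $\omega_{z_{n,t}^{g}}^{g}$ and the normalizer $K(\cdot)$ absorb it — so~\cref{eq:7} should be read as the normalized representative of the equivalence class of admissible scores, namely the member that is numerically well behaved. Making the ``law of total probability over $z_{n,0:t-1}^{g}$'' step fully rigorous (i.e.\ justifying a sum over trajectory states rather than an integral over the latent space) is the other point needing care, and I would handle it by passing to the discrete state space on which the score-entropy formulation of~\cite{loudiscrete} is defined, where the sum is literally a partition function.
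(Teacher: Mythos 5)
You take essentially the same route as the paper: both treat \cref{eq:7} as a chosen parameterization rather than a derived identity, with $\exp{(f_{\theta}(z_{n,t}^{g},z_{n,0}^{g}))}$ acting as an unnormalized log-score for the clean state and a softmax over the candidate states $y\in z_{n,0:t-1}^{g}$ supplying the normalization. The paper's proof differs only in packaging: it passes through Bayes' theorem, defines $f_{\theta}(z_{n,t}^{g},z_{n,0}^{g})\approx\log p_{\theta}(z_{n,0}^{g}\,|\,z_{n,t}^{g})$, computes its explicit Gaussian form $-\frac{\|z_{n,t}^{g}-\sqrt{\bar{\alpha}_{t}^{g}}z_{n,0}^{g}\|^{2}}{2(1-\bar{\alpha}_{t}^{g})}+\frac{\|z_{n,t}^{g}\|^{2}}{2}$ (thereby tying the construction to \cref{th_1}), and then simply asserts the softmax normalization for numerical stability; you instead posit the numerator and the denominator directly (the denominator via a total-probability sum over trajectory states) and leave the explicit form of $f_{\theta}$ to be recovered at the optimum. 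Your additional cross-check against the score-entropy stationarity condition is really the content of \cref{th_2}, not of this statement, so it is optional here; what it buys is an argument that the parameterization is compatible with the training objective, which the paper does not attempt.

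One caution about your reconciliation of the range mismatch. You are right that a softmax coordinate lies in $(0,1)$ while the target ratio in \cref{eq:5} can exceed one, and the paper never addresses this --- spotting it is to your credit. But your proposed resolution is incorrect: the score entropy \cref{eq:4} is \emph{not} invariant under replacing $s_{\theta}$ by a positive state-dependent multiple. The weight $\omega_{z_{n,t}^{g}}^{g}$ multiplies the entire summand and $K(\cdot)$ depends only on the data ratio, so neither can absorb a rescaling of $s_{\theta}$; and although the term $-\frac{p_{data}(y)}{p_{data}(z_{n,t}^{g})}\log{s_{\theta}}$ shifts only by a $\theta$-independent constant under $s_{\theta}\mapsto c\,s_{\theta}$, the linear term $s_{\theta}$ does change, moving the per-state minimizer from $s_{\theta}=\frac{p_{data}(y)}{p_{data}(z_{n,t}^{g})}$ to $\frac{1}{c}\cdot\frac{p_{data}(y)}{p_{data}(z_{n,t}^{g})}$, i.e.\ the loss selects a genuinely different function. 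Because \cref{eq:7} is established as a definition, this flaw does not invalidate your construction of the stated formula, but the ``normalized representative of an equivalence class'' reading cannot be justified by loss invariance and would need a different argument (the paper itself leaves the tension unresolved).
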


\begin{proof}
    To derive \cref{eq:300}, we start from the definition of the score-based probability ratio:
    \begin{equation}
        s_{\theta}(z_{n,t}^{g}) = \frac{p_{\theta}(z_{n,0}^{g})}{p_{\theta}(z_{n,t}^{g})}.
    \label{eq:301}
    \end{equation}
    
    Using Bayes' theorem, we can express the conditional probability as:
    \begin{equation}
        p_{\theta}(z_{n,0}^{g} | z_{n,t}^{g}) = \frac{p(z_{n,t}^{g} | z_{n,0}^{g}) p_{\theta}(z_{n,0}^{g})}{p(z_{n,t}^{g})}.
    \label{eq:302}
    \end{equation}
    
    Taking the logarithm on both sides, we define a learnable function $f_{\theta}(z_{n,t}^{g},z_{n,0}^{g})$ that approximates:
    \begin{equation}
        f_{\theta}(z_{n,t}^{g},z_{n,0}^{g}) \approx \log p_{\theta}(z_{n,0}^{g} | z_{n,t}^{g}).
    \label{eq:303}
    \end{equation}
    
    Given the forward diffusion process follows:
    \begin{equation}
        p(z_{n,t}^{g} | z_{n,0}^{g}) = \mathcal{N}(z_{n,t}^{g}; \sqrt{\bar{\alpha}_{t}^{g}} z_{n,0}^{g}, (1 - \bar{\alpha}_{t}^{g}) I),
    \label{eq:304}
    \end{equation}
    and the marginal distribution:
    \begin{equation}
        q(z_{n,t}^{g}) \approx \mathcal{N}(z_{n,t}^{g}; 0, I),
    \label{eq:305}
    \end{equation}
    we obtain:
    \begin{equation}
        f_{\theta}(z_{n,t}^{g},z_{n,0}^{g}) = -\frac{\| z_{n,t}^{g} - \sqrt{\bar{\alpha}_{t}^{g}} z_{n,0}^{g} \|^2}{2 (1 - \bar{\alpha}_{t}^{g})} + \frac{\| z_{n,t}^{g} \|^2}{2}.
    \label{eq:306}
    \end{equation}
    
    To ensure numerical stability and gradient optimization, we normalize $s_{\theta}(z_{n,t}^{g})$ using softmax over the set of possible denoising states:
    \begin{equation}
        s_{\theta}(z_{n,t}^{g}) = \frac{\exp{(f_{\theta}(z_{n,t}^{g},z_{n,0}^{g}))}}{\sum_{y\in z_{n,0:t-1}^{g}}\exp{(f_{\theta}(z_{n,t}^{g},y))}}.
    \label{eq:307}
    \end{equation}
    
    Thus, we have derived \cref{eq:300}, which provides a parameterized score function for probability ratio estimation.
\end{proof}

\subsection{Theorem 3}
\label{sec:AppA_th2}
\begin{theorem} \label{th_2_app}
    To achieve the optimal \textbf{s}core \textbf{e}ntropy \cite{loudiscrete} which is demonstrated on~\cref{eq:202}, the selection step choose $j$ items where each $z_{n,t}^g$ satisfies $se = 0$, i.e.,
    \begin{equation}
        s_{\theta}(z_{n,t}^g) \approx \frac{p_{data}(y)}{p_{data}(z_{n,t}^{g})}
    \label{eq:200}
    \end{equation}
\end{theorem}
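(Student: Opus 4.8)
The plan is to recognize the score entropy $se$ of \cref{eq:4} as a sum over predecessor states of identical per-term functions, each a strictly convex function of the model ratio $s_\theta(z_{n,t}^g)$ whose unique global minimizer, with minimal value $0$, is exactly the true data ratio; then $se=0$ forces $s_\theta$ to equal that ratio on every term, which is \cref{eq:200}. Concretely, fix $z_{n,t}^g$, and for an admissible predecessor $y$ set $a_y := p_{data}(y)/p_{data}(z_{n,t}^g)>0$ and $s := s_\theta(z_{n,t}^g)$. With $K(a)=a(\log a-1)$, the summand of \cref{eq:4} (after dropping the nonnegative weight $\omega_{z_{n,t}^g}^g$) is $\ell(s) := s - a_y\log s + a_y\log a_y - a_y$, which is precisely the Bregman divergence generated by the strictly convex potential $\psi(x)=x\log x-x$ between $a_y$ and $s$.

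First I would pin down the minimizer of $\ell$. Differentiating, $\ell'(s)=1-a_y/s$, so the only critical point on $(0,\infty)$ is $s=a_y$; since $\ell''(s)=a_y/s^2>0$, this critical point is the unique global minimum. Evaluating, $\ell(a_y)=a_y-a_y\log a_y+a_y\log a_y-a_y=0$. Hence $\ell(s)\ge 0$ for every $s>0$, with equality if and only if $s=a_y$. Equivalently, nonnegativity and the equality case are immediate from the defining properties of a Bregman divergence with a strictly convex generator.

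Next I would assemble the global claim. Because all weights satisfy $\omega_{z_{n,t}^g}^g\ge 0$, we have $se=\sum_{y}\omega_{z_{n,t}^g}^g\,\ell(s_\theta(z_{n,t}^g))\ge 0$, and $se=0$ holds precisely when every term vanishes, i.e.\ $s_\theta$ matches the true score ratio for each predecessor. Read through the softmax parameterization of \cref{eq:300} in Theorem~2, where the normalized score is attached to the target state, these per-$y$ constraints are mutually consistent and coincide with \cref{eq:200}. Thus the denoising step is optimal exactly when the Mamba block's predicted ratio equals the data ratio; the selection step, whose aim is to push $se$ toward $0$, therefore keeps the $j$ items along each scan sequence whose transitions already (approximately) satisfy $s_\theta(z_{n,t}^g)\approx p_{data}(y)/p_{data}(z_{n,t}^g)$ and discards the remainder.

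The main obstacle is not the convexity computation, which is routine, but the bookkeeping of the $y$-dependence of $s_\theta$: as written in \cref{eq:4}, $s_\theta(z_{n,t}^g)$ carries no explicit predecessor argument, yet the minimization imposes one equality per $y$. I would handle this by invoking Theorem~2's parameterization so that $s_\theta(z_{n,t}^g)$ is understood as the transition score $z_{n,t}^g \to y$, making the family of optimality conditions simultaneously attainable, and by stating the conclusion at the level of approximation ($se\approx 0$, hence $s_\theta\approx p_{data}(y)/p_{data}(z_{n,t}^g)$), since in practice $f_\theta$ only approximately realizes the closed forms of Theorems~1--2.
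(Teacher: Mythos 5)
Your proposal is correct and proves the mathematical core of the theorem, but it takes a cleaner and partly different route from the paper. The paper's proof of this statement has three parts: (i) it differentiates $se$ with respect to $s_{\theta}(z_{n,t}^{g})$ (and with respect to $\theta$), sets the gradient to zero, and reads off the stationarity condition $s_{\theta}(z_{n,t}^{g}) = p_{data}(y)/p_{data}(z_{n,t}^{g})$; (ii) it then writes out the Mamba-2 state-space update rules for $A,\bar{A},B,\bar{B},C,D,\Delta$ via the chain rule; and (iii) it argues convergence of that gradient-descent iteration under strong convexity, Lipschitz-gradient and stable-$\bar{A}$ assumptions, concluding that $se\to 0$ and hence that the $j$ selected items satisfy the ratio-matching condition. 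Your argument replaces part (i) with a sharper statement: by identifying each summand $s - a_y\log s + K(a_y)$ as the Bregman divergence of $\psi(x)=x\log x - x$, you get not only the first-order condition but also strict convexity, nonnegativity of every term, the value $0$ at the minimizer, and the ``$se=0$ if and only if every term matches'' equivalence — the paper only asserts nonnegativity of $K$ and derives stationarity, so your version is genuinely tighter on the equivalence the theorem actually claims. You also handle the $y$-dependence of $s_{\theta}$ explicitly via the softmax parameterization of Theorem~2, a point the paper leaves implicit. What you do not reproduce is the paper's parts (ii)--(iii), i.e.\ the argument that the Mamba block's parameter updates can actually drive $se$ toward $0$ so that items satisfying \cref{eq:200} exist to be selected; this is attainability/training machinery rather than the stated equivalence, so its omission does not invalidate your proof of the theorem as written, but it is the portion of the paper's proof your route does not buy you.
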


\begin{proof}
    To prove the~\cref{th_2_app}, we divide this proof into three parts: The first is to determine \textbf{the optimization target of the model approximation}. The second is to determine \textbf{the iterative process of the model optimization target}. The third is to prove \textbf{the convergence validity of the iterative process}.

    1) \textbf{The optimization target of the model approximation}
    
    According to the denoising score entropy proposed by Lou ~\etal ~\cite{loudiscrete}, the Mamba block loss function can be defined as follows:
    \begin{equation}
        L_{se} = \mathbb{E}_{z_{n,0}^{g}\sim p_{0},z_{n}^{g}\sim p(\cdot|z_{n,0}^{g})} se
        \label{eq:201}
    \end{equation}
    
    To minimize the loss function, the $se$ should be closed to value 0. And based on the score entropy loss~\cite{loudiscrete}, the $se$ can be described as:
    \begin{equation}
        \begin{split}
        se = \sum_{y\in z_{n,0:t-1}^{g}} \omega_{z_{n,t}^{g}}^{g} \bigg( & s_{\theta}(z_{n,t}^g) - \frac{p_{data}(y)}{p_{data}(z_{n,t}^{g})} \log{s_{\theta}(z_{n,t}^g)} \\
        & + K\left(\frac{p_{data}(y)}{p_{data}(z_{n,t}^{g})}\right) \bigg),
        \end{split}
        \label{eq:202}
    \end{equation}
    where $K(a)=a(\log{a}-1)$ a normalization term that ensures the loss is non-negative. And weights $\omega_{z_{n,t}^{g}}^{g}\in (0,1)$ can adjust the weights assigned to different noise latent representations. This can improve optimization efficiency by explicitly selecting important point pairs. For example, higher weights can be assigned to noise latent representations that may introduce larger errors within a specific range, thereby guiding the update of the model. And ultimately control the final total $se$ to be close to 0. And $s_{\theta}(z_{n,t}^g)$ is $n$-th noise latent representation of the model predicted score ratio at $t$-th denoising step.
    
    To determine the necessary conditions for minimizing \( se \), we compute the partial derivative with respect to \( s_{\theta}(z_{n,t}^{g}) \):
    \begin{equation}
        \frac{\partial se}{\partial s_{\theta}(z_{n,t}^{g})} = \sum_{y\in z_{n,0:t-1}^{g}} \omega_{z_{n,t}^{g}}^{g} \bigg( 1 - \frac{p_{\text{data}}(y)}{p_{\text{data}}(z_{n,t}^{g})} \frac{1}{s_{\theta}(z_{n,t}^{g})} \bigg).
    \end{equation}

    Setting the gradient to zero for optimization,
    \begin{equation}
        1 - \frac{p_{\text{data}}(y)}{p_{\text{data}}(z_{n,t}^{g})} \frac{1}{s_{\theta}(z_{n,t}^{g})} = 0.
    \end{equation}
    
    Rearranging the terms, we obtain:
    \begin{equation}
        s_{\theta}(z_{n,t}^{g}) = \frac{p_{\text{data}}(y)}{p_{\text{data}}(z_{n,t}^{g})}.
        \label{eq:optimal_condition}
    \end{equation}
    
    Thus, at the optimal solution, the predicted score function must exactly match the empirical probability ratio.

    For model parameters \( \theta \), we analyze the gradient:
    \begin{equation}
        \begin{split}
            \frac{\partial se}{\partial \theta} = \sum_{y\in z_{n,0:t-1}^{g}} \omega_{z_{n,t}^{g}}^{g} \bigg( \frac{\partial s_{\theta}(z_{n,t}^{g})}{\partial \theta} - \\
            \frac{p_{\text{data}}(y)}{p_{\text{data}}(z_{n,t}^{g})} \frac{1}{s_{\theta}(z_{n,t}^{g})} \frac{\partial s_{\theta}(z_{n,t}^{g})}{\partial \theta} \bigg).
        \end{split}
    \end{equation}
    
    For gradient convergence, we set the derivative to zero:
    \begin{equation}
        \frac{\partial s_{\theta}(z_{n,t}^{g})}{\partial \theta} \bigg( 1 - \frac{p_{\text{data}}(y)}{p_{\text{data}}(z_{n,t}^{g})} \frac{1}{s_{\theta}(z_{n,t}^{g})} \bigg) = 0.
        \label{eq:delta}
    \end{equation}
    
    Since the gradient term \( \frac{\partial s_{\theta}(z_{n,t}^{g})}{\partial \theta} \) is nonzero for model updates, the following condition must hold:
    \begin{equation}
        1 - \frac{p_{\text{data}}(y)}{p_{\text{data}}(z_{n,t}^{g})} \frac{1}{s_{\theta}(z_{n,t}^{g})} = 0,
    \end{equation}
    which again yields the optimal condition:
    \begin{equation}
        s_{\theta}(z_{n,t}^{g}) = \frac{p_{\text{data}}(y)}{p_{\text{data}}(z_{n,t}^{g})}.
    \end{equation}

    In summary, the necessary conditions for minimizing the Score Entropy Loss and ensuring the optimal score function are:
    \begin{itemize}
        \item The predicted score function must satisfy:
        \begin{equation}
            s_{\theta}(z_{n,t}^{g}) = \frac{p_{\text{data}}(y)}{p_{\text{data}}(z_{n,t}^{g})}.
        \end{equation}
        \item The gradient with respect to the model parameters must satisfy:
        \begin{equation}
            \frac{\partial s_{\theta}(z_{n,t}^{g})}{\partial \theta} \bigg( 1 - \frac{p_{\text{data}}(y)}{p_{\text{data}}(z_{n,t}^{g})} \frac{1}{s_{\theta}(z_{n,t}^{g})} \bigg) = 0.
        \end{equation}
    \end{itemize}
    
    These conditions imply that when the model learns the correct probability ratio, the gradient becomes zero, leading to optimal convergence of the Score Entropy Loss. Therefore, optimizing \( s_{\theta}(z_{n,t}^{g}) \) to match \( \frac{p_{\text{data}}(y)}{p_{\text{data}}(z_{n,t}^{g})} \) is both a necessary and sufficient condition for achieving the lowest possible loss.

    Based on~\cref{eq:delta}, $\theta = \lbrace H_{n,t}^{g}, A, B, C, D, \Delta \rbrace$ represent the state space in the block. We can obtain the selected noise latent representation $z_{n,t}^g$ by updating the computation in the state space architecture from Mamba-2~\cite{gu2023mamba}, which can be defined as follows:
    \begin{equation}
        H_{n,t}^{g} = \bar{A}H_{n,t-1}^{g} + \bar{B}z_{n,t}^{g}
        \label{eq:210}
    \end{equation}
    \begin{equation}
        z_{n-1,t}^{g} = CH_{n,t}^{g} + Dz_{n,t}^{g}
        \label{eq:211}
    \end{equation}
    \begin{equation}
        \bar{A} = \exp{(\Delta A)}
        \label{eq:212}
    \end{equation}
    \begin{equation}
        \bar{B} = (\Delta A)^{-1}\cdot (\exp{(\Delta A)}-I)\cdot \Delta B
        \label{eq:213}
    \end{equation}
    where $H_{n,t}^{g}$ represents the hidden state representation, $A$ and $B$ control the evolution of hidden states and latent space noise vector inputs, respectively, $C$ governs the hidden state representation of the target output and $D$ manages the nonlinear skip connection for latent space noise vector inputs. $\Delta$ denotes the learnable time parameter.
    
    2) \textbf{The iterative process of the model optimization target}    
    Considering the parameters in $\theta$, they are updated by the following steps.
    First, \textbf{the update of $A$ and $\bar{A}$}. Given that $\bar{A}$ controls the recursive evolution of hidden state $H_{n,t}^{g}$ based on $A$ and $\Delta$, we can gain the relationship in~\cref{eq:212}. So, the gradient can be described as follows:
    \begin{equation}
        \frac{\partial \mathcal{L}}{\partial A} = \frac{\partial \mathcal{L}}{\partial \bar{A}} \cdot \frac{\partial \bar{A}}{\partial A}
        \label{eq:214}
    \end{equation}
    where
    \begin{equation}
        \frac{\partial \bar{A}}{\partial A} = \Delta \cdot \exp{(\Delta A)}
        \label{eq:215}
    \end{equation}
    then through backpropagation to calculate the gradient of $\mathcal{L}$ to $\bar{A}$ and combined with the chain rule to update $A$.

    Second, \textbf{the update of $B$ and $\bar{B}$}. Given that the definition of $\bar{B}$ in~\cref{eq:213}, the gradient can be described as follows (familiar with the update rule of $A$):
    \begin{equation}
        \frac{\partial \mathcal{L}}{\partial B} = \frac{\partial \mathcal{L}}{\partial \bar{B}} \cdot \frac{\partial \bar{B}}{\partial B}
        \label{eq:216}
    \end{equation}
    where gradient transfer involves matrix derivation, which requires considering the derivative rule of matrix multiplication. Finally, the chain rule depends on the gradients of $\Delta A$ and $\Delta B$.

    Third, \textbf{the update of $C$}. Given that $C$ controls the hidden state and its direct contribution to the output $z_{n-1,t}^g$ is as~\cref{eq:211} defined, the gradient can be described as follows:
    \begin{equation}
        \frac{\partial \mathcal{L}}{\partial C} = \frac{\partial \mathcal{L}}{\partial z_{n-1,t}^g} \cdot \frac{\partial z_{n-1,t}^g}{\partial C}
        \label{eq:217}
    \end{equation}
    where
    \begin{equation}
        \frac{\partial z_{n-1,t}^g}{\partial C} = H_{n,t}^g
        \label{eq:218}
    \end{equation}

    So the update rule can be described as follows:
    \begin{equation}
        C\leftarrow C - \eta\frac{\partial \mathcal{L}}{\partial C}
        \label{eq:219}
    \end{equation}
    where $\eta$ is the learning rate.

    Fourth, \textbf{the update of $D$}. Given that $D$ governs the skip connection and directly act on $z_{n,t}^g$, the gradient can be defined as follows:
    \begin{equation}
        \frac{\partial \mathcal{L}}{\partial D} = \frac{\partial \mathcal{L}}{\partial z_{n-1,t}^g} \cdot \frac{\partial z_{n-1,t}^g}{\partial D}
        \label{eq:220}
    \end{equation}
    where
    \begin{equation}
        \frac{\partial z_{n-1,t}^g}{\partial D} = z_{n,t}^g
        \label{eq:221}
    \end{equation}

    Fifth, \textbf{the update of $\Delta$}. $\Delta$ denotes the learnable time parameter and affects the dynamic behavior of $\bar{A}$ and $\bar{B}$. So the gradient can be defined as follows:
    \begin{equation}
        \frac{\partial \mathcal{L}}{\partial \Delta} = \frac{\partial \mathcal{L}}{\partial \bar{A}}\cdot \frac{\partial \bar{A}}{\partial \Delta} + \frac{\partial \mathcal{L}}{\partial \bar{B}}\cdot \frac{\partial \bar{B}}{\partial \Delta}
        \label{eq:222}
    \end{equation}
    where
    \begin{equation}
        \frac{\partial \bar{A}}{\partial \Delta}=A\cdot \exp{(\Delta A)}
        \label{eq:223}
    \end{equation}
    \begin{align}
        f(A, B, \Delta) = & -(\Delta A)^{-1} A (\Delta A)^{-1} (\exp(\Delta A) - I) \Delta B \notag \\
        & + (\Delta A)^{-1} (A \exp(\Delta A)) \Delta B \notag \\
        & + (\Delta A)^{-1} (\exp(\Delta A) - I) B
        \label{eq:224}
    \end{align}

    In this problem, the structure of the state space model and the diffusion model provide theoretical support for the strong convexity of the loss function and the Lipschitz property of the gradient. First, the stability of the state space model leads to the hidden state update equation:
    \begin{equation}
        H_{n,t}^g = \bar{A} H_{n,t-1}^g + \bar{B} z_{n,t}^g
        \label{eq:229}
    \end{equation}
    where $\bar{A} = \exp(\Delta A), \bar{B} = (\Delta A)^{-1} (\exp(\Delta A) - I) \Delta B$ is generated via matrix exponential. It has the following characteristics:
    \begin{itemize}
        \item If $A$ is a stable matrix (all eigenvalues have negative real parts), then the modulus of the eigenvalues of $\bar{A}$ is less than $1$, which ensures that the hidden state does not diverge.
        \item The state update equation is linear, so the gradient of the parameters $A,B,C,D$ is linearly solvable, making it easy to optimize.
    \end{itemize}

    Secondly, given the characteristics of the diffusion model, there is a score ratio prediction loss function:
    \begin{equation}
        \mathcal{L} = \mathbb{E}_{z_{n,t}^g, p} \left[ \| \frac{p_{data}(y)}{p_{data}(z_{n,t}^{g})} - s_\theta(z_{n,t}^g) \|_2^2 \right]
        \label{eq:230}
    \end{equation}
    where $\mathcal{L}$ is in squared error form and is therefore a convex function (subconvexity). Then the gradient can be expressed as follows:
    \begin{equation}
        \nabla_\theta \mathcal{L} = 2 \mathbb{E}_{z_{n,t}^g, p} \left[ \| (\frac{p_{data}(y)}{p_{data}(z_{n,t}^{g})} - s_\theta(z_{n,t}^g)) \nabla_\theta s_\theta(z_{n,t}^g) \|_2^2 \right]
        \label{eq:231}
    \end{equation}
    where the gradient is a linear combination of $\theta$ and satisfies the Lipschitz continuity condition.

    To sum up, combined with the model parameters $\theta={A,B,C,D,\Delta}$, there is the following convergence of the specific parameter updating process.

    First for the hidden state update:
    \begin{equation}
        H_{n,t}^g = \bar{A} H_{n,t-1}^g + \bar{B} z_{n,t}^g
        \label{eq:232}
    \end{equation}
    where $A$ is a stable matrix, $\bar{A}$ is stable, ensuring that the hidden state does not diverge.

    Second for output calculation:
    \begin{equation}
        z_{n-1,t}^g = C H_{n,t}^g + D z_{n,t}^g
        \label{eq:233}
    \end{equation}
    and it is a linear transformation, which ensures the stability of the gradient solution for $C$ and $D$.

    Third for time step parameters $\Delta$, it is a learnable parameter of the time scale, which is directly related to the discretization in the state space model. It is updated by the chain rule as follows:
    \begin{equation}
        \frac{\partial \mathcal{L}}{\partial \Delta} = \frac{\partial \mathcal{L}}{\partial \bar{A}} \cdot \frac{\partial \bar{A}}{\partial \Delta} + \frac{\partial \mathcal{L}}{\partial \bar{B}} \cdot \frac{\partial \bar{B}}{\partial \Delta}
        \label{eq:234}
    \end{equation}
    among this, in the discretization formula, $\bar{A}$ and $\bar{B}$ are exponential functions with continuous and differentiable gradients, which are easy to converge.

    3) \textbf{The convergence validity of the iterative process}

    In order to ensure the convergence of the above iterative process, the following conditions usually need to be met:
    \begin{itemize}
        \item The convergent objective function $\mathcal{L}$ is a continuously differentiable function with respect to parameter $\theta$ and it is strongly convex or subconvex (at least a convex function).
        \item Make sure the learning rate satisfies $0<\eta<2/L$ where $L$ is the Lipschitz constant for the gradient $\nabla_\theta\mathcal{L}$ of the convergent objective function (the upper bound on the rate of change of the gradient).
        \item The matrix $\bar{A}$ (generated by discretization) is stable, that is, the magnitude of its eigenvalues is less than 1.
    \end{itemize}

    When the above convergence conditions are met, assuming that the convergence target $\mathcal{L}$ function is a $\mu$-strongly convex function (strong convexity is a stricter form of convex function), the convergence of gradient descent can be proved by the following formula. First, the updated formula for gradient descent is given:
    \begin{equation}
        \theta^{k+1} = \theta^k - \eta \nabla_{\theta} \mathcal{L}(\theta^k)
        \label{eq:226}
    \end{equation}
    where $\theta^k$ is the parameter vector at the k-th iteration.

    Secondly, the properties of strongly convex functions are given, that is, if the convergent objective function $\mathcal{L}$ is $\mu$-strongly convex and the Lipschitz constant of the gradient is $L$, then the error of the gradient descent method will converge at an exponential rate:
    \begin{equation}
        \mathcal{L}(\theta^k) - \mathcal{L}(\theta^*) \leq \rho^k \big(\mathcal{L}(\theta^0) - \mathcal{L}(\theta^*)\big)
        \label{eq:227}
    \end{equation}
    where $\rho = 1 - 2\eta\mu$ is the convergence rate ($0 < \rho < 1$), and $\theta^*$ is the global optimum.

    Third, if the Lipschitz gradient condition is satisfied, that is, $\nabla_\theta \mathcal{L}$ is $L$-Lipschitz continuous:
    \begin{equation}
        \|\nabla_\theta \mathcal{L}(\theta_1) - \nabla_\theta \mathcal{L}(\theta_2)\| \leq L \|\theta_1 - \theta_2\|
        \label{eq:228}
    \end{equation}
    then selecting a learning rate $0 < \eta < \frac{2}{L}$ ensures convergence.
    \begin{algorithm}\label{algo:1}
    \caption{Gradient Descent Algorithm}
    \begin{algorithmic}[h]
        \State \textbf{Input:} Initialize parameters $A$, $B$, $C$, $D$, and $\Delta$.
        \Repeat
            \State Calculate the loss $\mathcal{L}$.
            \State Compute the gradient of $\mathcal{L}$ with respect to $A$, $B$, $C$, $D$, and $\Delta$ using the chain rule.
            \State Update each parameter using the gradient descent rule.
            \State Perform backpropagation to compute: \\
            $\nabla_{\theta}\|\frac{p_{data}(y)}{p_{data}(z_{n,t}^{g})} - s_\theta(z_{n,t}^{g})\|_2^2$.
        \Until{convergence}
        \end{algorithmic}
    \end{algorithm}
    
    In general, the process of update and convergence can be summarized in Algorithm 1. Through repeated iterations, the model parameter $\theta$ will be gradually optimized, so that the convergence objective function $\mathcal{L}$ will be converged and $se$ gradually approaches 0, that is, $s_\theta$ approaches $\frac{p_{data}(y)}{p_{data}(z_{n,t}^{g})}$. Then $j$ items of noise latent representation $z_{n,t}^g$ that satisfy all the above conditions will be selected, and the model will proceed to the next step of denoising in the direction of these $j$ items.

    Above all, in the inference stage, the model will choose the best noise latent representation of image patch or text embedding, including $j$ items to restore the image or text. Due to this, the model has already learned from the datasets that should be focused on and ignored. Compared with the Transformer models, which need to calculate all image patches or text embeddings, it will shorten the inference time when generating high-resolution images or long-sequence text. The results are shown in the main paper Section 5.3.1 Performance Analysis.
    
\end{proof}

\section{Appendix B}
\label{sec:AppB}
\subsection{Denoising process based on DPM-Solver}
Based on the diffusion denoising model trained by Score Entropy Loss, we hope to combine DPM-Solver (Diffusion Probabilistic Model Solver)\cite{lu2022dpm} in the inference stage to reduce sampling steps and improve inference efficiency.

DPM-Solver is a high-order ODE-solving method for diffusion models. It constructs partial differential equations (ODEs) and uses numerical solution techniques to accelerate the diffusion denoising process. It can restore high-quality data from Gaussian noise in a minimal number of steps (such as 10 steps) without sacrificing model performance.

The core idea of DPM-Solver is to reformulate the inverse diffusion process of the diffusion model as an ordinary differential equation (ODE) and solve it efficiently using numerical methods. For the standard diffusion model, we have:
\begin{equation}
    \frac{dz_{n,t}^g}{dt} = -\frac{1}{2}\beta_t z_{n,t}^g + \sqrt{\beta_t}\epsilon_{n,t}^g, \quad \epsilon_{n,t}^g \sim \mathcal{N}(0, I).
\end{equation}
DPM-Solver estimates $\epsilon_{\theta}(z_{n,t}^{g},t)$ by denoising the score matching, which can be rewritten as:
\begin{equation}
    \frac{dz_{n,t}^g}{dt} = f_{\theta}(z_{n,t}^{g},t),
\end{equation}
where the formula describes the rate of change of the latent variable $z_{n,t}^{g}$ in the time $t$ dimension, and its evolution process can be accelerated by numerical solution methods.

In the Mamba decoder trained with Score Entropy Loss, we learn:
\begin{equation}
    s_{\theta}(z_{n,t}^{g}) = \frac{\exp{(f_{\theta}(z_{n,t}^{g},z_{n,0}^{g}))}}{\sum_{y\in z_{n,0:t-1}^{g}}\exp{(f_{\theta}(z_{n,t}^{g},y))}}.
\end{equation}

Therefore, in the DPM-Solver framework, we hope to use this ratio's gradient information to directly construct the ODE and reduce the number of sampling steps during inference.

First, we need to compute denoised ODE. DPM-Solver uses Score Matching technology~\cite{lu2022dpm} to predict the noise $\epsilon_{\theta}(z_{n,t}^{g},t)$ through a neural network, and then calculates it according to the denoising ODE:
\begin{equation}
    \frac{dz_{n,t}^g}{dt} = -\frac{1}{2}\beta_t \left( \frac{z_{n,t}^g - \sqrt{\bar{\alpha}_t^g}z_{n,0}^g}{1 - \bar{\alpha}_t^g} \right),
\end{equation}
furthermore, we can calculate based on Score Entropy~\cite{loudiscrete}:
\begin{equation}
    \frac{dz_{n,t}^g}{dt} = -\frac{1}{2}\beta_t s_\theta(z_{n,t}^g)\nabla_z \log p_\theta(z_{n,0}^g | z_{n,t}^g),
\end{equation}
where $\nabla_z \log p_\theta(z_{n,0}^g | z_{n,t}^g)$ is calculated by $se$, $s_\theta(z_{n,t}^g$ is predicted probability ratios through neural networks. This formula describes the ODE trajectory from the noisy state $z_{n,t}^g$ to the denoised state $z_{n,0}^g$.

We then use DPM-Solver to perform inference. For the first-order approximation method, the basic form of DPM-Solver is the first-order ODE approximation:
\begin{equation}
    z_{n,t}^g \approx z_{n,t-\Delta t}^g - \frac{1}{2} \beta_t \left( \frac{z_{n,t}^g - \sqrt{\bar{\alpha}_t^g} z_{n,0}^g}{1 - \bar{\alpha}_t^g} \right) \Delta t,
\end{equation}
by using $s_{\theta}(z_{n,t}^g)$ calculated by Score Entropy Loss, we can further rewrite the formula:
\begin{equation}
    z_{n,t}^g \approx z_{n,t-\Delta t}^g - \frac{1}{2} \beta_t s_\theta(z_{n,t}^g) \nabla_z \log p_\theta(z_{n,0}^g | z_{n,t}^g) \Delta t.
\end{equation}
The formula can be directly used to update the denoising process to achieve efficient sampling iteratively.

Furthermore, DPM-Solver uses second-order numerical methods~\cite{lu2022dpm} to improve accuracy:
\begin{equation}
    z^g_{n,t} = z^g_{n,t-\Delta t} + \frac{\Delta t}{2} \left[ f_\theta(z^g_{n,t}, t) + f_\theta(z^g_{n,t-\Delta t}, t - \Delta t) \right]
\end{equation}
which allows us to complete denoising inference in a very small number of iterations (e.g., 10-20 steps), significantly speeding up the computation compared to normal diffusion sampling (e.g., 1000 steps).
\begin{algorithm}\label{algo:mamba_dpm}
    \caption{Mamba-Based Inference with DPM-Solver}
    \begin{algorithmic}[h]
        \State \textbf{Input:} Noisy latent state \( z_{n,t}^{g} \).
        \Repeat
            \State Predict the score function \( s_{\theta}(z_{n,t}^{g}) \) for computing the denoising ODE.
            \State Apply DPM-Solver update rule:
            $z_{n,t}^{g} \leftarrow z_{n,t-\Delta t}^{g} + \frac{\Delta t}{2} \left[ f_{\theta}(z_{n,t}^{g}, t) + f_{\theta}(z_{n,t-\Delta t}^{g}, t - \Delta t) \right]$.
        \Until{gain the $z_{n,t}^{g}$}
    \end{algorithmic}
\end{algorithm}

\section{Appendix C}
\label{sec:AppC}
\subsection{Model Configuration}
\vspace{-1.2em}
\label{sec:AppC_MC}
\setcounter{table}{0}
\begin{table}[H]
    \centering
        \renewcommand{\arraystretch}{1}  
        \scriptsize
        \begin{tabular}{ll}
            \toprule
            Configuration & Value\\
            \midrule
            Size & 7B \\
            Mamba block & 49 \\
            Hidden Dimension & 2048 \\
            GFlops & 424 \\
            Optimizer & AdamW \\
            Learning Rate & 0.0001 \\
            Weight Decay & - \\
            Training Epochs & 1 \\
            Sampling step & 500000 \\
            EMA & 0.9999 \\
            Patch size & 2$\times$2 \\
            Maximum Token Length & 512 \\
            \bottomrule
        \end{tabular}
        \vspace{-1em}
        \caption{Parameter settings for MDM.}
        \label{tab:1_app}
\end{table}
\vspace{-1.2em}
\section{Appendix D}
\label{sec:AppD}
\subsection{SentencePiece (Unigram BPE)}
SentencePiece (Unigram BPE) \cite{kudo2018sentencepiece} provides an optimal subword-based tokenization approach that enables improved generalization and adaptability for handling both textual and multimodal data.

\subsubsection{Theoretical Background}
SentencePiece employs a probabilistic model based on a Unigram Language Model (ULM), where each sentence \( x \) is decomposed into a sequence of subwords \( s_i \) with a likelihood function:
\begin{equation}
    p(x) = \prod_{i} p(s_i),
\end{equation}
where each subword unit \( s_i \) is assigned a probability estimated from training data. Unlike traditional Byte-Pair Encoding (BPE), which deterministically merges frequent subword pairs, the Unigram BPE method probabilistically learns an optimal vocabulary while gradually discarding subwords with lower contributions.

To train SentencePiece, an initial vocabulary is constructed using all possible subword combinations, after which an iterative Expectation-Maximization (EM) optimization is performed. At each iteration, subwords contributing the least to sequence likelihoods are removed, leading to an optimal vocabulary.

\subsubsection{Training Procedure}

The training of the SentencePiece model is conducted on a large-scale dataset containing both pure-text corpora and multimodal text-image descriptions. Given the multimodal nature of our dataset, we mix textual data from Ultrachat and text descriptions from JourneyDB and ImageNet to ensure cross-modal adaptability.

\paragraph{Dataset Preprocessing:} To prepare the dataset, raw text is extracted, normalized, and formatted as a line-separated corpus file. The dataset mixing strategy follows:
\begin{itemize}
    \item Extract textual information from Ultrachat.
    \item Concatenate textual descriptions from JourneyDB and ImageNet.
    \item Remove redundant, low-quality, or excessively short text samples.
    \item Shuffle the corpus to prevent dataset bias.
\end{itemize}

\paragraph{SentencePiece Model Training:} The SentencePiece Unigram BPE model is trained using the following configuration:
\begin{verbatim}
import sentencepiece as spm
spm.SentencePieceTrainer.train(
    input="text_data.txt",  
    # Training corpus
    model_prefix="unigram_bpe",  
    # Output model prefix
    vocab_size=32000,  
    # Vocabulary size
    model_type="unigram",  
    # Unigram-based BPE
    character_coverage=0.9995,  
    # Coverage for rare characters
    num_threads=8,  
    # Parallel training
    input_sentence_size=1000000,  
    # Sample size
    shuffle_input_sentence=True  
    # Shuffle corpus
)
\end{verbatim}
This results in two key output files: \texttt{unigram\_bpe.model} (binary model for tokenization) and \texttt{unigram\_bpe.vocab} (vocabulary list with probabilities).

\subsubsection{Evaluation and Optimization Strategies}

The effectiveness of the trained tokenization model is evaluated based on tokenization efficiency and generalization capability. The following criteria are considered:
\begin{itemize}
    \item \textbf{Subword Granularity}: The trade-off between word and character-level tokenization.
    \item \textbf{Out-of-Vocabulary (OOV) Rate}: The ability to handle unseen words.
    \item \textbf{Multimodal Alignment}: The compatibility of subword embeddings with image features in the latent space.
\end{itemize}

Given the computational constraints of multimodal diffusion models, we optimize the SentencePiece model with:
\begin{itemize}
    \item Selecting an optimal \texttt{vocab\_size} (\( 16K \)-\( 32K \)) to balance representation and sequence length.
    \item Applying dataset mixture strategies to enhance generalization across different data distributions.
    \item Ensuring tokenization stability by enforcing \texttt{character\_coverage} \( 0.9995 \) to capture rare textual variations.
\end{itemize}

\section{Appendix E}
\label{sec:AppE}
\subsection{Complexity}
Since the size of the noisy latent encoder (VAE) is significantly smaller than that of the diffusion decoder (Mamba), we will focus our analysis on the computational complexity of the diffusion decoder. According to~\cite{qu2024survey}, the complexity of each Mamba block is $\mathcal{O}(LN^2)$, where $L$ is the length of the input data and $N$ refers to the size of each parameter ($\lbrace H_{n,t}^{g}, A, B, C, D, \Delta \rbrace$) in the state space. The diffusion decoder is composed of $M$ Mamba blocks, resulting in an overall computational complexity of $\mathcal{O}(MLN^2)$.

For comparison, consider an equivalent end-to-end transformer model optimized with GQA \cite{zhao2024monoformer, team2024chameleon, ainslie2023gqa}. This model maintains the same input length $L$ and GQA module dimension $N$. With $M$ layers and a grouping parameter $G$, its computational complexity is $\mathcal{O}(ML^{2}N/G)$.

Determining which complexity is superior between $\mathcal{O}(MLN^2)$ and $\mathcal{O}(ML^{2}N/G)$ can be challenging. However, it is important to note that $N$ can be significantly smaller than $L/G$ when $L$ is very large. As a result, the proposed MDM can achieve greater computational efficiency than end-to-end transformer models when processing high-resolution images and long-sequence texts.

\section{Appendix F}
\label{sec:AppF}
\subsection{Image generation}
\label{sec:AppF_UIGR}
\setcounter{figure}{0}
\begin{figure}[H]
  \centering
   \includegraphics[width=1\linewidth]{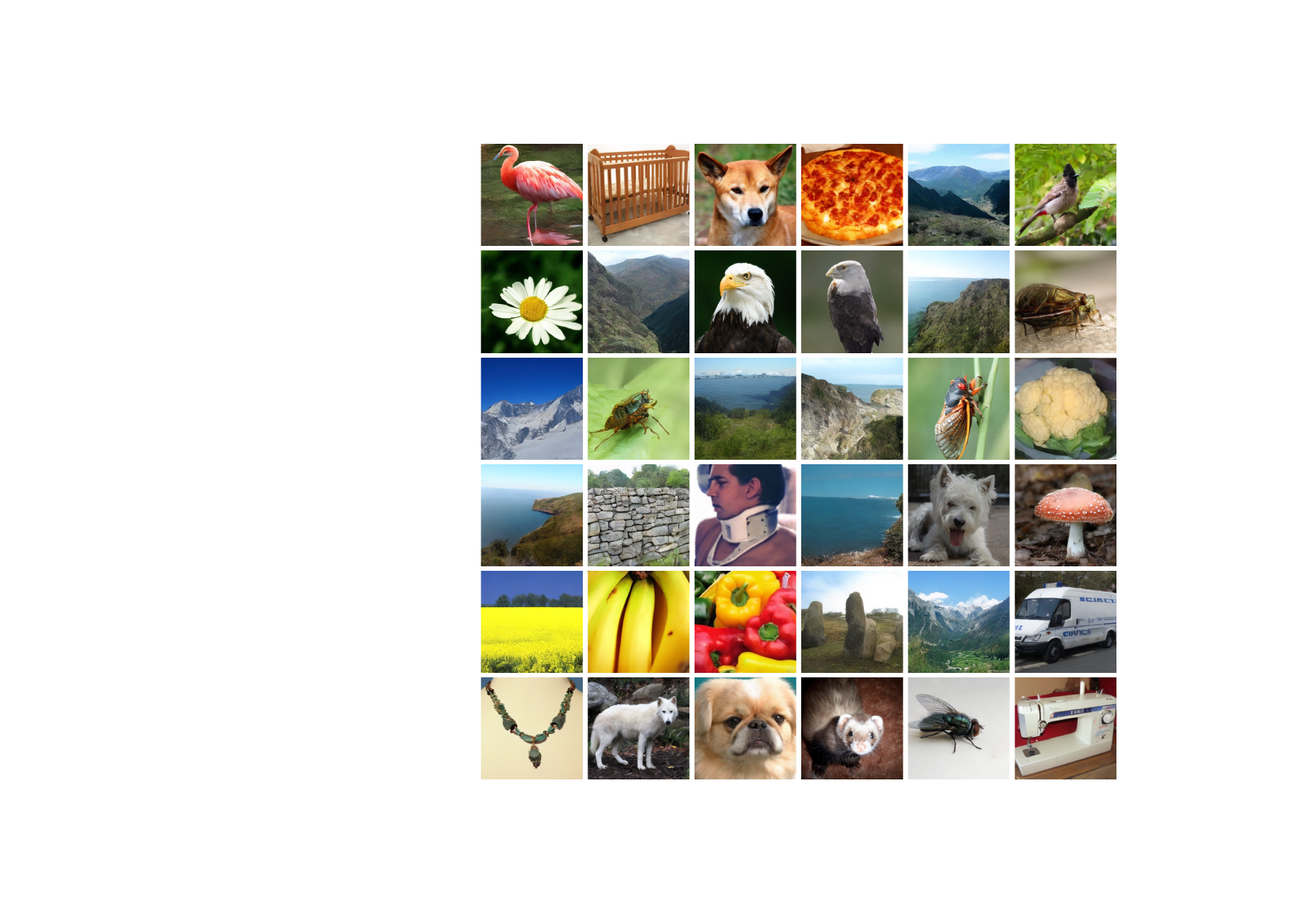}
   \caption{Image generation with CFG on ImageNet~\cite{deng2009imagenet} 256 $\times$ 256.}
   \label{fig:1_app}
\end{figure}

\subsection{Image generation on COCO and Flickr}
\label{sec:AppF_IC}
\begin{figure}[H]
  \centering
   \includegraphics[width=1\linewidth]{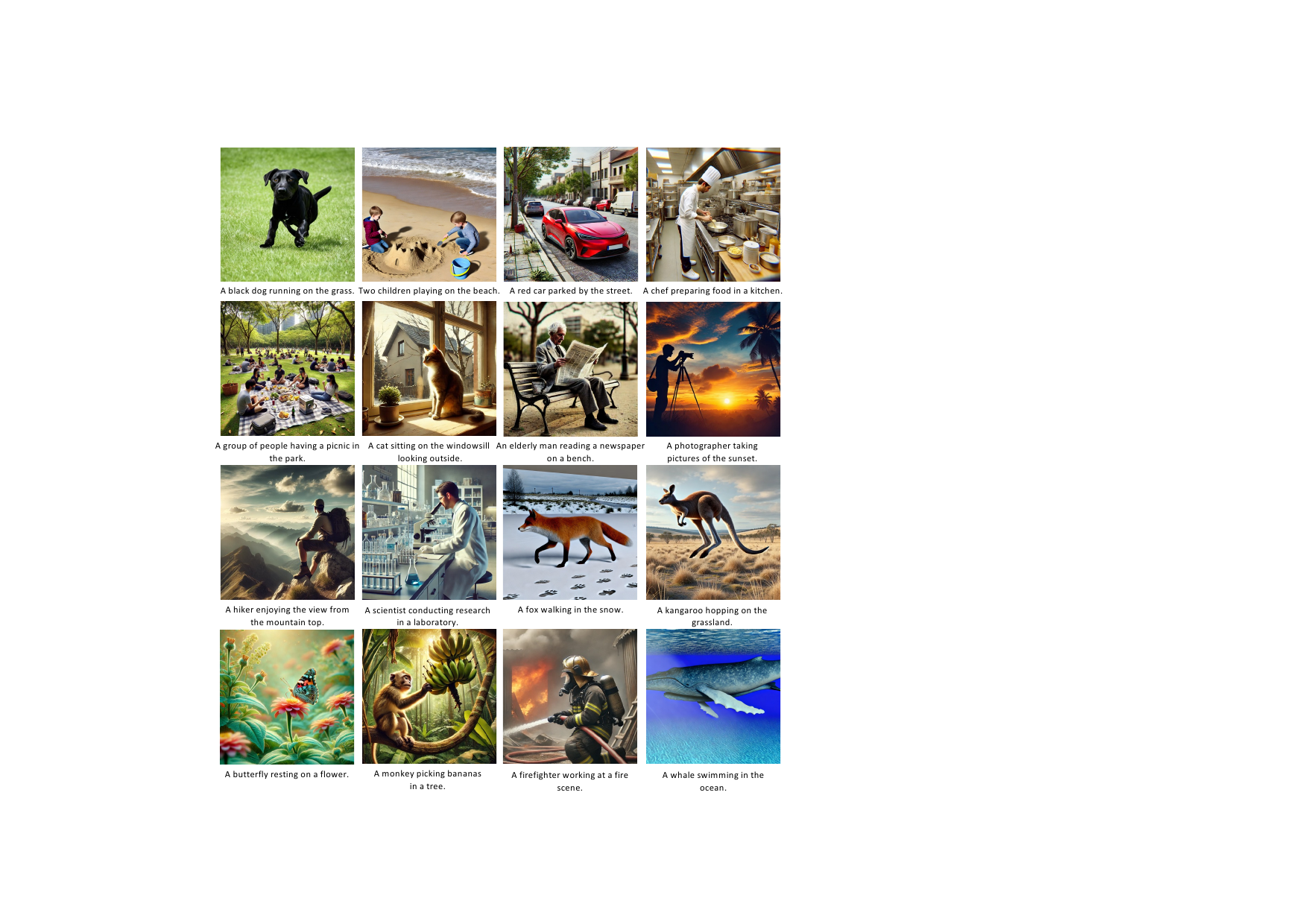}
   \caption{Image generation on COCO~\cite{karpathy2015deep} caption text.}
   \label{fig:2_app}
\end{figure}
\begin{figure}[H]
  \centering
   \includegraphics[width=1\linewidth]{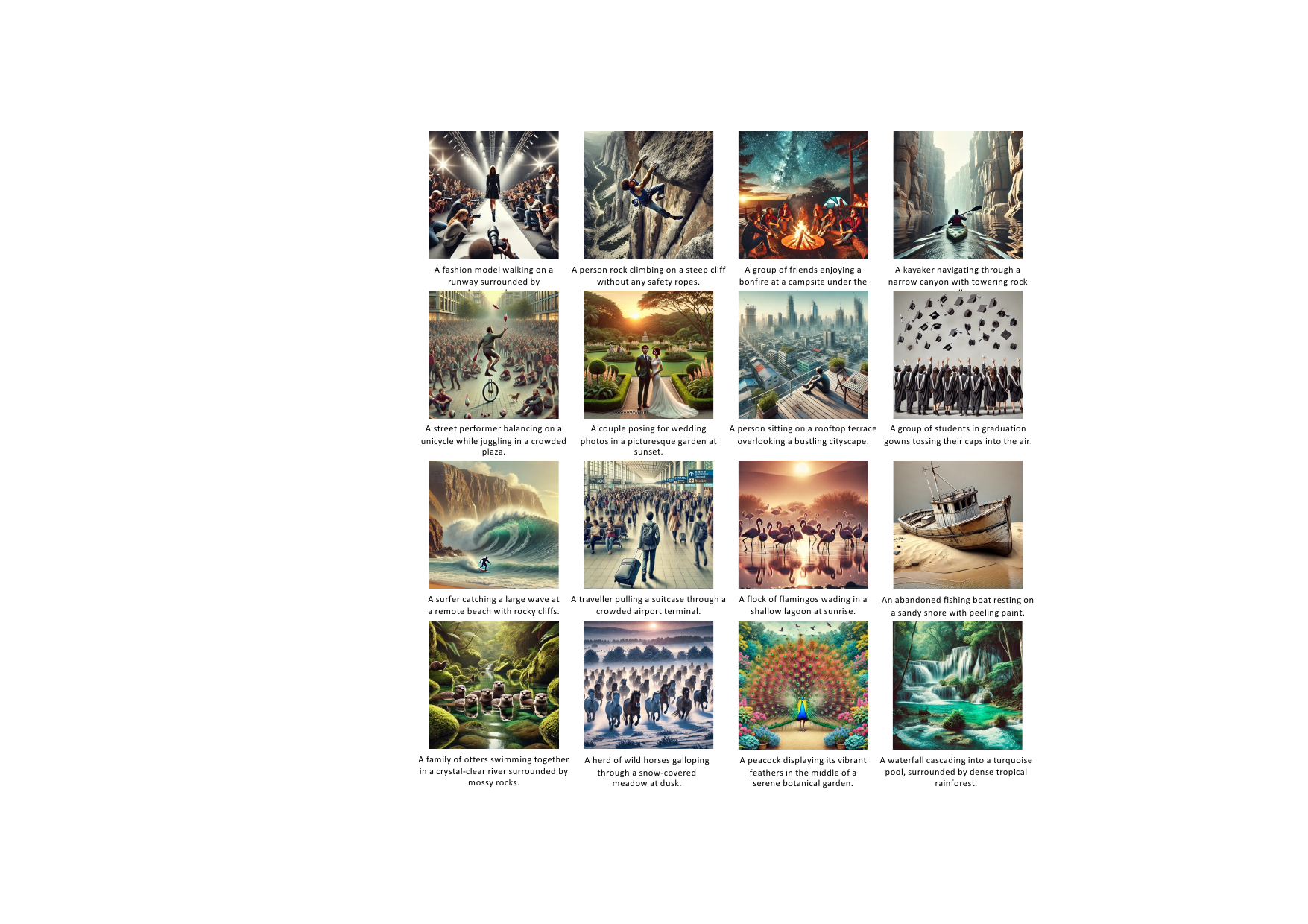}
   \caption{Image generation on Flickr 30K~\cite{young2014image} caption text.}
   \label{fig:3_app}
\end{figure}


\section{Appendix G}
\label{sec:AppG}
\vspace{-1em}
\begin{figure}[H]
  \centering
   \includegraphics[width=1\linewidth]{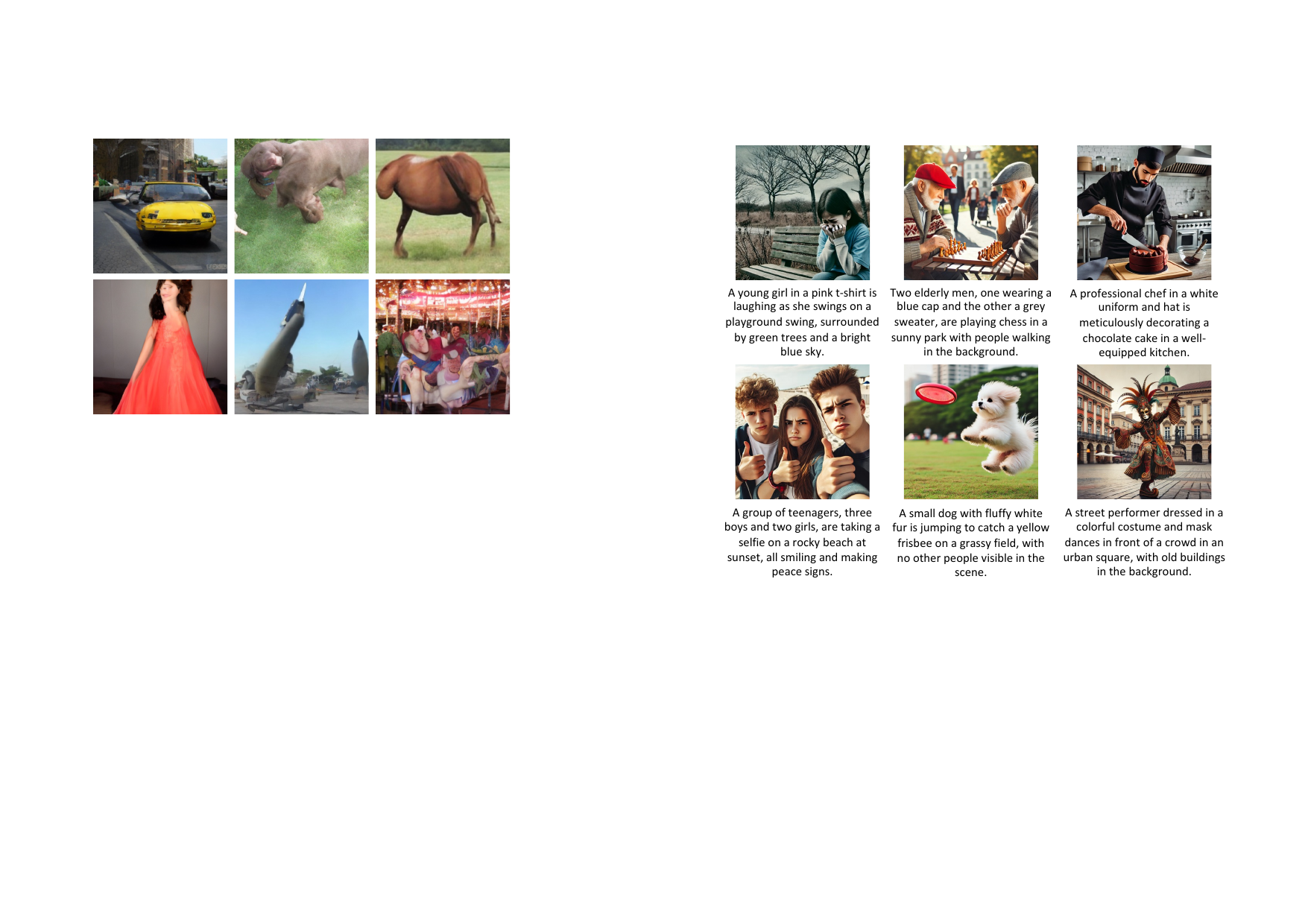}
   \caption{Drawbacks in image generation.}
   \label{fig:5_app}
\end{figure}
\subsection{Drawbacks}
\label{sec:AppG_D}
While MDM demonstrates strong performance across various tasks and enhanced processing speed for high-resolution images and long text sequences (as shown in the main paper Section 5.3.1 Performance Analysis), it faces several limitations. The model shows reduced efficiency when handling low-resolution images or short text sequences, and its overall performance still trails behind traditional multi-modal pre-trained models. Furthermore, the model exhibits hallucination issues. These limitations represent key areas for future improvement.

It can be observed from ~\cref{fig:5_app} that MDM still generates a small number of defective images, such as image deformation, collapse, distortion, and blurring. This may be due to the model's scale being insufficient and limitations in how each modality's data is represented in the decoder. Additionally, the diffusion reduction process might experience some instability, which could lead to subpar sampling results. Therefore, there is still potential for further improvements to the model to address these issues.

The partial performance results of the model on the Flickr 30K dataset reveal significant challenges, particularly when dealing with complex text data that requires generating intricate images, especially those involving people and animals. The model often loses important details, such as facial features and the depiction of limbs. Additionally, it exhibits a tendency to be inefficient and make errors, such as repetitively copying and pasting certain objects, resulting in a dilution of detail for those entities and the generation of instances that do not accurately match the accompanying descriptive language (as shown in~\cref{fig:3_app,fig:6_app}). The main reason for the above problems is that the Flickr 30K dataset emphasizes the correlation between different modal semantics rather than focusing solely on classification or recognition tasks like the COCO dataset. This means that the model needs stronger capabilities for multi-modal semantic understanding. The MDM model employs a unified modal fusion decoder under a constrained scale, which may limit its ability to enhance semantic understanding compared to traditional models. Therefore, the MDM model needs continuous optimization.
\begin{figure}[H]
  \centering
   \includegraphics[width=1\linewidth]{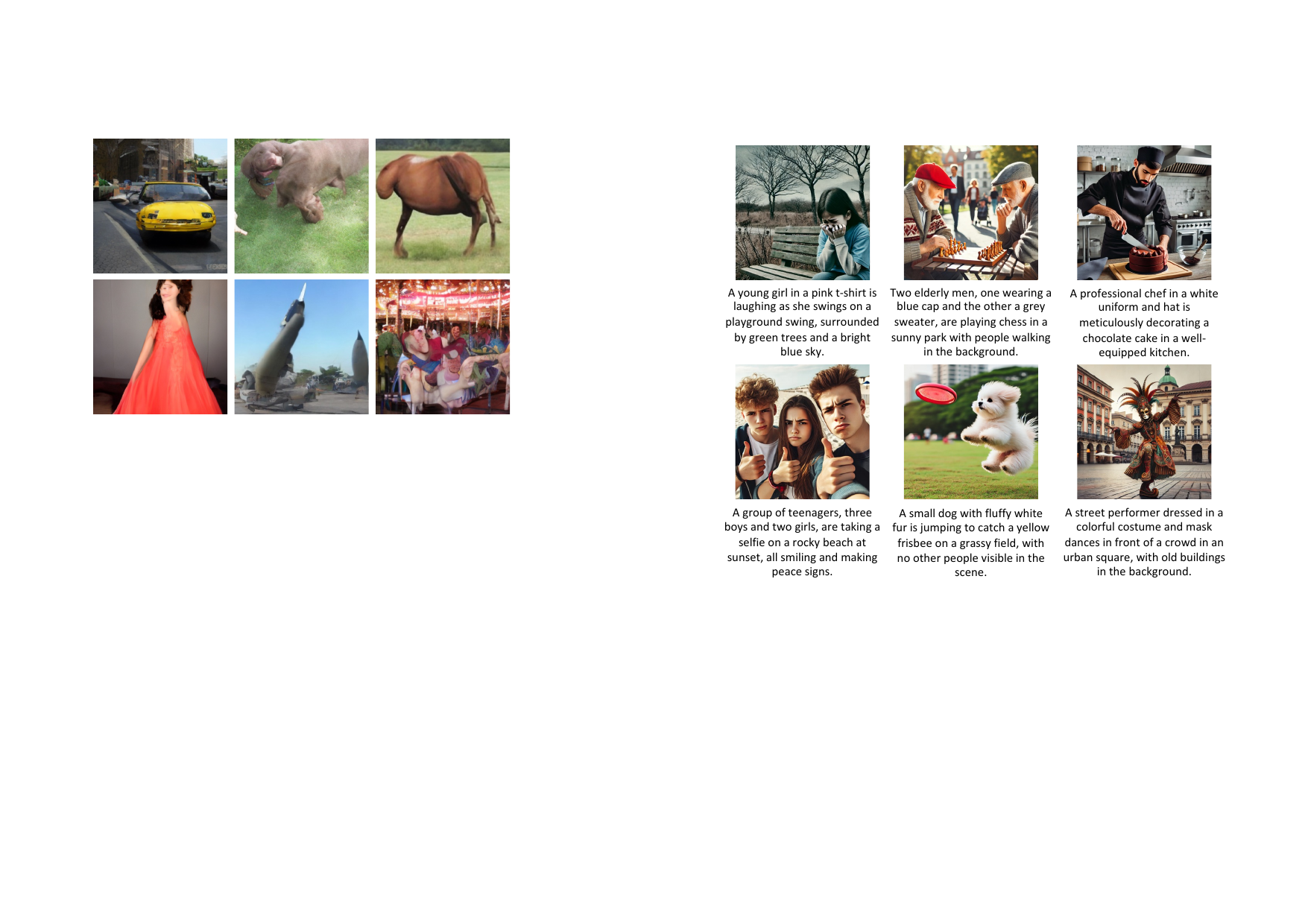}
   \caption{Drawbacks in generating complex captions images.}
   \label{fig:6_app}
\end{figure}

\end{document}